\documentclass[journal]{IEEEtran}
\usepackage{xcolor}
\usepackage{booktabs}
\usepackage{threeparttable}
\usepackage{multirow}
\usepackage{amsmath,amsfonts,amssymb, mathrsfs, epsfig,epstopdf,url,array} 
\usepackage{graphicx} 
\usepackage{float} 
\usepackage{subfigure} 
\usepackage{siunitx}
\usepackage{makecell}
\usepackage{cite}
\usepackage{ulem}
\usepackage{tikz}
\usepackage{bm}
\usepackage{amsthm}
\usetikzlibrary{arrows.meta,calc,positioning}
\usepackage[ruled,linesnumbered]{algorithm2e}
\SetKw{Continue}{continue}
\SetKw{Break}{break}
\SetKwData{KwData}{Initialize:}
\SetKwComment{Comment}{/* }{ */}
\SetKwComment{SingleComment}{// }{}
\SetKwComment{TriComment}{$\triangleright$\ }{}

\newtheorem{mytheo}{Theorem}
\newtheorem{myrek}{Remark}

\newtheorem{myppt}{Property}

\begin{document}

\title{{Global Optimality of Pose Graph Optimization}}

\author{Shuan Ding*, Ruishuang Chen*, Mengfan  Cao, and Zaiyue Yang,~\IEEEmembership{Senior Member,~IEEE}
\thanks{
	
	* S. Ding and R. Chen contribute equally to this work.
	
	\textit{(Corresponding author: Zaiyue Yang.)}
		
S. Ding, M. Cao, and Z. Yang are with the Guangdong Provincial Key Laboratory of FullyActuated System Control Theory and Technology, and the School of Automation and Intelligent Manufacturing, Southern University of Science and Technology, Shenzhen 518055, China (e-mails: dingsa2022@mail.sustech.edu.cn; 12331364@mail.sustech.edu.cn; yangzy3@sustech.edu.cn).

R. Chen is with the Department of Computer Science and Engineering, Southern University of Science and Technology, Shenzhen 518055, China (e-mail: chenrs2020@mail.sustech.edu.cn).


}
}

\maketitle

\begin{abstract}
Pose Graph Optimization (PGO) is a classical and widely applied problem in state estimation for robotics, playing a fundamental role in localization, mapping, and navigation. 
Yet, it is nonconvex in nature and may have many local minimizers, thus difficult to analyze.
In this paper, we investigate the optimization landscape 
and establish a sufficient condition of global optimality for PGO.
The ultimate goal of this paper is to theoretically confirm that the optimization landscape is benign and `attractive' around the global minimizer, if noise is smaller than an explicit and \textit{a priori} bound.
To reveal such an insight, this paper develops three main technical results.
First, we identify a proper basin of attraction (BOA) containing global minimizer, and characterize its geometric property by showing that its radius is controlled by noise in an explicit way. 
Then, by exploring the first order optimality conditions of PGO, we prove that a local minimizer is globally optimal if a noise-controlled criterion is satisfied. 
Finally, by combining above two results, we prove that one and only one local minimizer exists in the BOA and this minimizer is globally optimal, when the noise is smaller than an explicit bound that can be verified \textit{a priori}. 
Numerical simulations  validate the proposed theory under various random  cases, showing that the obtained conditions provide an effective criterion for certifying the global optimality of PGO.
\end{abstract}

\begin{IEEEkeywords}
Pose graph optimization (PGO), Global optimality, Bounds analysis, Nosie analysis
\end{IEEEkeywords}

\section{Introduction}

\subsection{Motivation}

Pose Graph Optimization (PGO) \cite{RN1047} is a core problem for state estimation in the field of robotics, with broad applications in tasks such as simultaneous localization and mapping (SLAM) \cite{RN1139, RN1169,RN1170}, multi-robot coordination \cite{RN1136, RN1138}, autonomous navigation \cite{RN1140}, etc.
Its basic objective is to recover the global poses of a group of nodes from noisy relative pose measurements \cite{RN1141, RN1171}. 
Since the estimated poses directly affect subsequent missions, 
the accuracy and stability of solving  PGO largely determine the overall performance of the entire system \cite{RN1143}. 
Therefore, PGO is not only an engineering problem with broad practical relevance, but also an important theoretical problem that has long attracted significant attention in geometric estimation and optimization \cite{RN1172, RN1173, RN1174}. 

However, PGO is inherently high-dimensional and nonconvex. 
In particular, the rotational variables are typically constrained by group structures or manifold constraints, which endow both the feasible set and the objective function with complicated geometric characteristics. 
Intuitively, such complexity may suggest that PGO does not possess a favorable optimization landscape, and as a result, much of the existing research has focused on developing more efficient  algorithms \cite{RN1151, RN1147, RN1145}. 
Meanwhile, practical computation process shows that, although PGO is nonconvex, most algorithms often can converge to the global optimum, and \textit{a posteriori} optimality certificates further support this observation \cite{RN1084, RN1153}. 

Admittedly, such strong performance is made possible in large part by the effectiveness of these algorithms and 
strategies.
However, this phenomenon also inspires a more theoretical line of inquiry: perhaps PGO is benign under certain conditions, so that its nonconvex formulation can in fact be solved correctly, and the obtained solution indeed corresponds to the global optimum. 
Motivated by this perspective, this paper investigates the intrinsic landscape properties of PGO from the viewpoint of its structural characteristics, and establishes analytical conditions for achieving global optimality.

\subsection{Literature Review}

The formulation and solution of  PGO  can be traced back to the seminal work of Lu and Milios in 1997 \cite{RN1154}, and since then it has attracted widespread attention in the robotics community.
Due to the strong engineering background of PGO, existing research has mainly focused on the design of  algorithms. 
The most classical approach is to treat the original problem from the perspective of nonlinear programming (NLP) and solve it via numerical iteration and searching based on a given initial guess \cite{RN1155}.  
These methods mainly include gradient-based methods \cite{RN1156, RN1157}, the Gauss-Newton method \cite{RN1158, RN1160, RN1069}, trust-region methods \cite{RN1161}, and related variants.
Due to the sensitivity of nonlinear iterative methods to initial guess and their potentially low computational efficiency, many researchers have proposed various acceleration strategies, including exploiting matrix sparsity for dimensionality reduction \cite{RN1068, RN1069}, limiting the number of poses to reduce complexity \cite{RN1070, RN1071}, 
utilizing approximate solutions \cite{RN1072}, and introducing a dimensionality reduced objective function \cite{RN1085}.
In addition, since the rotation subproblem in PGO can often be  solved effectively, many works first estimate the rotation and then use the resulting solution to bootstrap the subsequent iterative solver \cite{ RN1072,RN1077,RN1079}, 
thereby improving the optimality of  entire algorithm.

Besides analyzing and solving the problem based on  NLP, 
Carlone \textit{et al.} \cite{RN1047} introduce a new paradigm for PGO from the viewpoint of duality and relaxation.
They first construct a more compact mathematical model of PGO over the complex domain, and then, by exploiting Lagrangian duality, derive both a dual problem and a relaxation in the form of semidefinite programming (SDP) \cite{RN1163}. 
They prove that when the pose graph matrix has the single zero eigenvalue property (SZEP), the duality gap is zero, and the solution to  primal problem can be recovered by appropriately scaling the eigenvector associated with the zero eigenvalue. 
They further develop an SZEP-based solution algorithm, together with an \textit{a posteriori} criterion for certifying whether the obtained solution is globally optimal.

The  work in \cite{RN1047}  is the first to introduce SDP relaxation into the solution of  PGO, 
and it  also inspires subsequent studies to further improve and extend this framework and line of thought.
Rosen \textit{et al.} \cite{RN1082}  investigate PGO from the perspective of synchronization over the special Euclidean group and propose the classical SE-Sync algorithm based on SDP relaxation. By exploiting the low-rank, geometric, and graph-theoretic structures of the original problem, SE-Sync  reduces the SDP to an optimization problem on a low-dimensional Riemannian manifold \cite{RN1164}, and then solves it using a Riemannian truncated-Newton trust-region method. 
In addition, the algorithm is able to certify \textit{a posteriori} whether the estimated solution is globally optimal.
Briales et al. \cite{RN1084} also propose the efficient Cartan-Sync algorithm based on SDP relaxation and Riemannian optimization, which is likewise capable of certifying the global optimality of the estimated solution \textit{a posteriori}. However, Cartan-Sync jointly optimizes rotation and translation, whereas SE-Sync first eliminates the translation variables analytically and then performs the relaxation.
Choudhary \textit{et al.} \cite{RN1087} develop a distributed algorithm for PGO  based on successive over-relaxation and Jacobi over-relaxation, which can assign the computational tasks among different nodes to improve computation efficiency and system robustness. 
Furthermore, Tian \textit{et al.} \cite{RN1088} propose a Riemannian block coordinate descent (RBCD) method based on  sparse SDP relaxation to achieve distributed computation. 
Meanwhile, the authors propose distributed verification and saddle-point escape methods, which can certify the global optimality of estimated results and continue iterative descent from suboptimal critical points \cite{RN1142}.

Existing research has primarily focused on algorithmic design for PGO, leading to numerous efficient solvers that achieve remarkable performance in terms of computational efficiency,   noise robustness, and optimality \cite{RN1160, RN1069, RN1165}. 
However, relatively little attention has been devoted to the theoretical analysis of the structural properties of solutions and their global optimality \cite{RN1056}. 
Most existing work concentrates on \textit{a posteriori} verification criteria—namely, first obtaining a solution via an algorithm and then checking whether the resulting estimate satisfies global optimality. 
It should be emphasized that, given the high-dimensional non-convex nature of PGO, having an explicit 
\textit{a posteriori} optimality certificate is already a profound and significant achievement.  
Nevertheless, empirical observations from practical computation processes reveal that, under most non-adversarial noise scenarios, the estimates produced by current algorithms are often verified as globally optimal. 
This not only reflects the inherent strength of these algorithms but also suggests that PGO may be well-behaved  under certain conditions.

Actually, PGO is  structurally similar  to the widely studied angular synchronization or phase synchronization problems \cite{RN1167, RN1054, RN1166, RN1168}. Both belong to maximum-likelihood estimation problems and both involve unit-norm constraints.
The latter, with the aid of SDP relaxation, has been shown to possess a favorable optimization landscape under certain conditions and admits  \textit{a prior} conditions that guarantee global optimality.
Motivated by these observations and empirical phenomena, this paper investigates the optimization landscape of PGO from the perspective of  structural properties, and establishes analytical conditions for its estimated solutions to achieve global optimality, along with characterizing relevant properties.

\subsection{Contributions of the Paper}

In contrast to most existing studies on PGO, which mainly focus on the design of efficient numerical solvers and \textit{a posteriori} certification of solution, this paper investigates PGO from a theoretical perspective, with particular emphasis on the relationship among graph spectral structure, measurement noise, and global optimality. The main contributions of this paper are summarized as follows:

1) We identify a proper basin of attraction (BOA) that containing the global minimizer, and then derive explicit upper bounds on its scope under both the
$2$-norm and the $\infty$-norm. These bounds quantitatively characterize how the orbit radius depends on the spectral gap and the noise.

2) We characterize the explicit relationship between the optimality and  orbit radius over the entire feasible set.
Building on the SDP relaxation and the Karush-Kuhn-Tucke (KKT) conditions,
we prove that when the orbit radius is sufficiently small and satisfies specific conditions involving the noise matrix, the corresponding feasible solution is globally optimal.


3) By combining the orbit radius bounds with the bounds on the action of noise, we obtain an explicit analytical upper bound on the norm of noise matrix in BOA. 
These conditions are independent of the specific value of orbit radius,
thereby providing a more direct and \textit{a priori} theoretical sufficient condition for analyzing the optimization landscape of PGO.

\subsection{Organization of the Paper}
The organization of the paper is as follows. Section II
formulates the PGO problem. Section III illustrates the scope of BOA. Section IV discusses the optimality conditions of porblem for any solution.
Section V analyzes the optimality conditions within the BOA.
Section VI presents the simulation results  and the conclusions are given
in Section VII.

\subsection{Notation Description}
For ease of exposition, some basic notations are introduced here.
$\mathbb{C}$ denotes the complex domain;
$\mathbb{R}$ denotes the real domain;
$\|*\|$ applies only to matrices  and denotes the spectral norm;
$\|\cdot\|_{2(\infty)}$ denote the $2$-norm ($\infty$-norm) of vector;
$j$ is the complex operator;
$*^H$ denotes the conjugate transpose;
$\Re$ denotes the real part.
$\operatorname{diag} $ constructs a diagonal matrix from a vector, or extracts the diagonal elements from matrix to constructs a diagonal matrix;
$\operatorname{diagV} $ extracts the diagonal elements from a matrix to form a vector.

\begin{figure}[t] 
	\centering 
	\includegraphics[width=0.45\textwidth]{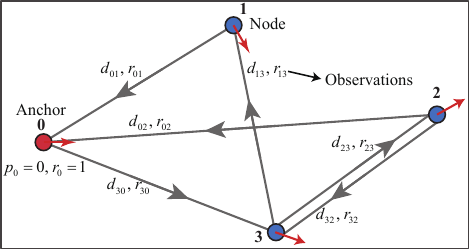} 
	\caption{Illustration of PGO.} 
	\label{fig:model_example}
\end{figure}

\section{Problem Formulation}

As the foundation for theoretical analysis, this section presents the problem formulation of PGO, including its general form, the equivalent form with coordinate variables eliminated, and the gauge-normalized form.

\subsection{General and equivalent formulation of PGO}

In this part, we will  present the general formulation of PGO and its equivalent reformulation. 
We focus on the planar PGO case, which is the most widely studied scenario. 
For brevity without loss of generality, we illustrate the mathematical model of PGO using the instance of multi-robot relative localization, as shown in Fig. \ref{fig:model_example}.

In Fig. \ref{fig:model_example}, $ p_i \in \mathbb{C}$ is the global position of robot $i $ 
($ i = 0,1, \dots, N-1$ and $ N $ denotes the number of robots),
$ r_i \in \mathbb{C}$ is the global orientation, $d_{ij} \in \mathbb{C}$ and $r_{ij} \in \mathbb{C}$ are the relative position and orientation measurements ($i$  measures $ j$), respectively. 
The task of PGO is to recover the position $p_i $ and orientation $r_i$ of each robot in a global coordinate frame based on  relative measurements $d_{ij}, r_{ij}$. 
Since there is no  global reference frame, one robot must be designated as an anchor. Typically, robot $0$ can be  chosen as the anchor, 
thus we have  $p_0 = 0, r_0 = 1$.
It is emphasizing that the anchor merely serves as a unified reference coordinate and does not alter the nature of the problem. Therefore, in the subsequent analysis, we will focus on the generalized anchor-free formulation.

In the noiseless situation, the relative measurements of positions satisfy 
\begin{equation}\label{position_measure_single}
	p_i + r_i  d_{ij} - p_j = 0
\end{equation}
Similarly, the relative measurements of orientations satisfy
\begin{equation} \label{orientation_measure_single}
	r_i r_{ij} - r_j = 0
\end{equation}
To make the formulation more compact, 
we rewrite the  $N_d $ position measurement equations (\ref{position_measure_single})  in matrix form:
\begin{equation}\label{AD-matrix}
	A \mathbf{p}  + D^{\star} \mathbf{r}  = \mathbf{0}
\end{equation}
where $  A \in \mathbb{R}^{N_d \times N}$  denotes the adjacency matrix (composed of entries  $1 ,  -1 ,   0$ , with  $1$  indicating observing and  $-1 $ indicating being observed),
$\mathbf{p} = \left[ p_0, p_1, \dots, p_{N-1} \right]^\top$,
$\mathbf{r} = \left[ r_0, r_1, \dots, r_{N-1} \right]^\top$,
and $D^{\star} \in \mathbb{C}^{N_d \times N}$ contains all the measurements of positions.
The $N_d$ orientation measurement equations (\ref{orientation_measure_single})  
can also be rewritten in  matrix form:
\begin{equation}\label{C-matrix}
	C^{\star}\mathbf{r} = 0
\end{equation}
where $C^{\star} \in \mathbb{C}^{N_d \times N}$ contains all the 
measurements of orientation.
Furthermore, combining  (\ref{AD-matrix}) and (\ref{C-matrix}) can yield the overall block-structured model:
\begin{equation}\label{no-noise-all-model}
	\left[\begin{array}{ll}
		A & D^{\star} \\
		0 & C^{\star}
	\end{array}\right]\left[\begin{array}{l}
		\mathbf{p} \\
		\mathbf{r}
	\end{array}\right]= \mathbf{0}
\end{equation}
Eq. (\ref{no-noise-all-model}) represents the PGO model under ideal condition and solving this linear equation directly yields  $\mathbf{p}$  and  $\mathbf{r}$.

However, the actual measurement process is subject to noise, thus solving (\ref{no-noise-all-model}) becomes a least square  problem (LSP) on the manifold of $N$-torus:
\begin{equation}\label{pgo-lsp}
	\begin{array}{ll}
		\underset{\mathbf{p}, \mathbf{r}}{\operatorname{min}} & \left\|A \mathbf{p}  + \tilde{D} \mathbf{r}\right\|^2_2+\left\|\tilde{C} \mathbf{r} \right\|^2_2 \\
		\text { s.t.} & \left|r_i\right|=1, \quad i=0, 1, \ldots, N-1 
	\end{array}
\end{equation}
where $\tilde{D} $ and $ \tilde{C}$ denote $D^{\star}$ and $ C^{\star} $ with noises, respectively.
Eq. (\ref{pgo-lsp})  is the generalized form of PGO, which is a non-convex quadratically constrained quadratic program (QCQP) with unit-norm constraints.

To make the problem more compact and amenable to analysis, we express the position variables in (\ref{pgo-lsp}) in terms of the rotation variables.
In (\ref{pgo-lsp}), fixing $\mathbf{r}$ can yield an LSP for solving  $\mathbf{p}$:
\begin{equation}
	\mathbf{p}^{\star}=\arg \min _{\mathbf{p}}\|A \mathbf{p}+\tilde{D} \mathbf{r}\|^2_2	
\end{equation}
where $ \mathbf{p}^{\star}$ denotes the solution.
Since this problem is a simple unconstrained LSP, 
we can directly obtain the closed-form expression for  $ \mathbf{p}^{\star}$:
\begin{equation}\label{p-star}
	\mathbf{p}^{\star} = -A^{\dagger} \left(\tilde{D} \mathbf{r} \right) 
	+ c  \mathbf 1
\end{equation}
where $A^{\dagger} $ is the Moore–Penrose pseudo inverse of  $A$, and $c$ denotes the constant for global translation. 
Obviously, the above solution is not unique due to the inherent global translation term in the anchor-free form. However, since this term has no impact on the objective function and subsequent analysis, it can be simply neglected.
Substituting (\ref{p-star}) into (\ref{pgo-lsp}), we have 
\begin{equation}\label{pgo-only-r}
	\begin{array}{ll}
		\underset{\mathbf{r}}{\operatorname{min}} & \left\|P_A^{\perp}\left(\tilde{D} \mathbf{r}\right)\right\|^2_2+\left\|\tilde{C} \mathbf{r} \right\|^2_2 \\
		\text { s.t.} & \left|r_i\right|=1, \quad i=0, 1, \ldots, N-1 
	\end{array}
\end{equation}
where $ P_A^{\perp} = I-A A^{\dagger}$ is the orthogonal projector onto the orthogonal complement of range($A$).
Furthermore, by expanding the objective function into a standard quadratic form, (\ref{pgo-only-r}) can be rewritten as:
\begin{equation}\label{pgo-unit-form}
	\begin{array}{ll}
	\underset{\mathbf{r}}{\operatorname{min}} & f(\mathbf r) 
	=  \mathbf{r}^H \left(\tilde{D}^H P_A^{\perp} \tilde{D} +
	 \tilde{C}^H \tilde{C} \right) \mathbf{r} 
	 =  \mathbf{r}^H \tilde{\mathcal{Q}} \mathbf{r} 
	 \\
	\text { s.t.} & \left|r_i\right|=1, \quad i=0, 1, \ldots, N-1 
\end{array}
\end{equation}
where  $\tilde{\mathcal{Q}}$  can be referred to as the \textbf{pose graph matrix}, which is Hermitian and positive semidefinite. 
Obviously, problem (\ref{pgo-unit-form}) is equivalent to the general form (\ref{pgo-lsp}), 
and the properties of  $\tilde{\mathcal{Q}}$   determine the optimization landscape of  problem.

It is noted that both  (\ref{pgo-lsp}) and (\ref{pgo-unit-form}) represent operable  formulations of  PGO. However, (\ref{pgo-unit-form}) offers a more compact form with a lower-dimensional solution space, the subsequent analysis will 
be established on the base of (\ref{pgo-unit-form}).

\subsection{Gauge-normalized form of PGO}

To facilitate the theoretical analysis of PGO, we further apply gauge normalization to its  formulation with respect to the true solution.

For problem (\ref{pgo-unit-form}),
let its planted solution be
\begin{equation}
	\mathbf{r}^{\star}=\left[\begin{array}{llll}
		{r}_0^{\star} & r_1^{\star} & \cdots & r_{N-1}^{\star}
	\end{array}\right]^T, \quad
	{\Phi}^{\star} \doteq \operatorname{diag}\left(\mathbf{r}^{\star}\right)
\end{equation}
and
\begin{equation}
	\mathbf{r}^{\star}=\Phi^{\star} \mathbf{1} 
\end{equation}
Introduce the gauge-normalized variable:
\begin{equation}\label{u to r}
	\mathbf{u} = \left(\Phi^{\star}\right)^H \mathbf{r}  \quad \rightarrow \quad
	\mathbf{r}=\Phi^{\star} \mathbf{u}
\end{equation}
then the planted solution satisfies:
\begin{equation}
	\mathbf{u}^{\star}=\left(\Phi^{\star}\right)^H \mathbf{r}^{\star}=\mathbf{1}
\end{equation}
Considering the noiseless form of (\ref{pgo-unit-form}) and 
substituting (\ref{u to r}) into it, we have
\begin{equation}
	\begin{aligned}
		\mathbf{r}^H \mathcal{Q}^{\star} \mathbf{r} & =\mathbf{u}^H\left(\Phi^{\star}\right)^H\left(\left(D^{\star}\right)^H P_A^{\perp} {D}^{\star}+\left({C}^{\star}\right)^H {C}^{\star}\right) \Phi^{\star} \mathbf{u} \\
		& = \mathbf{u}^H\left(\left(D^{\star} \Phi^{\star}\right)^H P_A^{\perp} {D}^{\star}\Phi^{\star}+\left({C}^{\star}\Phi^{\star}\right)^H {C}^{\star}\Phi^{\star}\right) \mathbf{u} \\
		& =\mathbf{u}^H \left(\left(D^\star_{\mathrm{eq}}\right)^H P_A^{\perp} D^\star_{\mathrm{eq}}+
		\left(C^\star_{\mathrm{eq}}\right)^H C^\star_{\mathrm{eq}}\right) \mathbf{u} \\
		& =\mathbf{u}^H  L_{\mathrm{eq}}^{\star} \mathbf{u}
	\end{aligned}
\end{equation}
where $\mathcal{Q}^{\star} , D^{\star}, C^{\star} $
denote the noiseless  $\tilde{\mathcal{Q} }, \tilde D^{}, \tilde C^{} $,  respectively,
and we can obtain
\begin{equation}
	\mathcal{Q}^{\star}=\Phi^{\star} L^{\star}_{\mathrm{eq}}\left(\Phi^{\star}\right)^H
\end{equation}
It is noted that $ L_{\mathrm{eq}}^{\star}$ is 
Hermitian and positive semidefinite, and 
has the planted null vector $\mathbf{1}$.
Therefore,
the noisy pose graph matrix $\tilde{\mathcal{Q}}$ can be denoted as
\begin{equation} \label{eq-of-Q}
	\begin{aligned}
		\tilde{\mathcal{Q}} & = \mathcal{Q}^{\star} + \Delta \mathcal{Q} \\
		&= 
		\Phi^{\star} L^{\star}_{\mathrm{eq}}\left(\Phi^{\star}\right)^H+\Delta \mathcal{Q}
	\end{aligned}
\end{equation}
where $\Delta \mathcal{Q}$ is the Hermitian noise matrix.

It can be noted that the above gauge-normalized form incorporates the true solution and normalizes it to $\mathbf 1$, which serves as the foundation for the subsequent analysis of orbit radius and optimality.

Before formally presenting the theoretical results, we first outline the overall analysis framework of this paper.
First, we present the standardized formulation of the PGO problem (Eq. (\ref{pgo-unit-form})), which serves as the fundamental formulation for subsequent analysis. Then, the explicit upper bounds of the BOA are analyzed based on different norm forms 
(Theorem \ref{lemma-1-2-norm} and Theorem \ref{d-inf-lemma}). Next, we derive the dual certificate (Eq. (\ref{k=q-R})) for optimality via the KKT conditions, and the optimality conditions over the entire feasible set of the original problem are provided 
(Theorem \ref{lemma-rank-k-n-1}). Finally, based on the preceding derivations, the impact of noise is analyzed (Property \ref{ppt-1} and Property 
\ref{new-tilde-Q-r-bound}), and the norm upper bound of noise matrix satisfying the optimality conditions within the BOA is further established (Theorem \ref{final-theo}).

%

\section{ Scope of the Basin of Attraction }

For a practical PGO problem, the presence of measurement noise directly complicates the analysis and computation of the global optimum. If the entire solution space is considered, the inherent non-convexity introduces numerous local minima, rendering the analysis intractable. Since the objective function of the original problem is in a standard quadratic form, it can be intuitively observed that the global optimum resides within the region defined by $\mathbf{r}^H \tilde{\mathcal{Q}} \mathbf{r}  \leq \left(\mathbf{r}^{\star}\right)^H\tilde{\mathcal{Q}} \mathbf{r}^{\star}$. This region is referred to as the BOA. 
In this section, we will utilize the true solution as a reference to characterize 
the explicit scope  and geometric radius bounds of this BOA in terms of both the 2-norm and the  $\infty$-norm.
%
%
%

We first investigate the radius bound in the average perspective ($2$-norm), as stated in the following theorem.

\begin{mytheo}[Radius bound of BOA in $2$-norm]\label{lemma-1-2-norm}
	For problem  (\ref{pgo-unit-form}) with (\ref{eq-of-Q}),
	and any feasible solution $\mathbf{r}$ in BOA, i.e., satisfying 
	\begin{equation}\label{r-r*-obj-lemma1}
		\mathbf{r}^H \tilde{\mathcal{Q}} \mathbf{r}  \leq \left(\mathbf{r}^{\star}\right)^H
	 \tilde{\mathcal{Q}} \mathbf{r}^{\star}
	\end{equation}  
	 define the orbit radius as
	 \begin{equation}\label{def-d}
	 	d \doteq \min _{\theta \in \mathbb{R}}\left\|e^{{j} \theta} \mathbf{r}-\mathbf{r}^{\star}\right\|_2
	 \end{equation}
	where $\theta$ denotes the global phase difference, 
	 then 
	 \begin{equation}\label{lemma1-res}
	 	d \leq \frac{4 \sqrt{N}}{\gamma^{\star}}\|\Delta \mathcal Q\|
	 \end{equation}
	 where  $\gamma^{\star}$  is  the smallest positive eigenvalue  of 
	 $L^{\star}_{\mathrm{eq}}$, i.e., the spectral gap.
\end{mytheo}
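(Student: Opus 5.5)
Our plan is to pass to gauge-normalized coordinates, turn the BOA inequality into a lower/upper sandwich on the quadratic form of $L^{\star}_{\mathrm{eq}}$, and then convert that into a bound on the orbit radius using the spectral gap $\gamma^{\star}$.

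\textbf{Step 1: gauge normalization.} Write $\mathbf{r}=\Phi^{\star}\mathbf{u}$ with $|u_i|=1$, so that $\mathbf{r}^{\star}=\Phi^{\star}\mathbf{1}$. Set $\Delta\mathcal{Q}_{\mathrm{eq}}=(\Phi^{\star})^H\Delta\mathcal{Q}\,\Phi^{\star}$; since $\Phi^{\star}$ is unitary, $\|\Delta\mathcal{Q}_{\mathrm{eq}}\|=\|\Delta\mathcal{Q}\|$. Because $L^{\star}_{\mathrm{eq}}\mathbf{1}=0$, the BOA condition (\ref{r-r*-obj-lemma1}) becomes
\begin{equation*}
\mathbf{u}^H L^{\star}_{\mathrm{eq}}\mathbf{u}\le \mathbf{1}^H\Delta\mathcal{Q}_{\mathrm{eq}}\mathbf{1}-\mathbf{u}^H\Delta\mathcal{Q}_{\mathrm{eq}}\mathbf{u}.
\end{equation*}

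\textbf{Step 2: upper bound on the quadratic form.} The right-hand side of the inequality in Step 1 is crude-bounded by $2N\|\Delta\mathcal{Q}\|$, since $\|\mathbf{u}\|_2^2=\|\mathbf{1}\|_2^2=N$. A sharper version uses
\begin{equation*}
\mathbf{1}^H E\mathbf{1}-\mathbf{u}^H E\mathbf{u}=\Re\bigl((\mathbf{1}-\mathbf{u})^H E(\mathbf{1}+\mathbf{u})\bigr)
\end{equation*}
with $E=\Delta\mathcal{Q}_{\mathrm{eq}}$. The global phase can be absorbed, because replacing $\mathbf{u}$ by $e^{j\theta}\mathbf{u}$ leaves both sides invariant. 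Choosing the minimizing $\theta$, so that $\|\mathbf{1}-e^{j\theta}\mathbf{u}\|_2=d$, this gives
\begin{equation*}
\mathbf{u}^H L^{\star}_{\mathrm{eq}}\mathbf{u}\le \|\Delta\mathcal{Q}\|\, d\,\|\mathbf{1}+\mathbf{u}\|_2\le 2\sqrt{N}\,d\,\|\Delta\mathcal{Q}\|.
\end{equation*}
This is the form that yields a bound linear in $\|\Delta\mathcal{Q}\|$.

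\textbf{Step 3: lower bound via the spectral gap.} Decompose $e^{j\theta}\mathbf{u}=\alpha\mathbf{1}/\sqrt{N}+\mathbf{w}$ with $\mathbf{w}\perp\mathbf{1}$. Then
\begin{equation*}
\mathbf{u}^H L^{\star}_{\mathrm{eq}}\mathbf{u}\ge\gamma^{\star}\|\mathbf{w}\|_2^2,
\end{equation*}
assuming $\mathbf{1}$ spans the null space of $L^{\star}_{\mathrm{eq}}$ (graph connectivity, implicit in calling $\gamma^{\star}$ the spectral gap). We then need $\|\mathbf{w}\|_2^2\ge c\,d^2$. At the optimal phase, $\alpha=\mathbf{1}^H e^{j\theta}\mathbf{u}/\sqrt{N}$ is real and nonnegative, with $\alpha^2+\|\mathbf{w}\|_2^2=N$. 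Also
\begin{equation*}
d^2=2N-2\sqrt{N}\alpha=2\sqrt{N}(\sqrt{N}-\alpha),
\qquad
\|\mathbf{w}\|_2^2=(\sqrt{N}-\alpha)(\sqrt{N}+\alpha)\ge(\sqrt{N}-\alpha)\sqrt{N}=d^2/2.
\end{equation*}
So $\mathbf{u}^H L^{\star}_{\mathrm{eq}}\mathbf{u}\ge\gamma^{\star}d^2/2$.

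\textbf{Step 4: combine.} Steps 2 and 3 give $\gamma^{\star}d^2/2\le 2\sqrt{N}d\|\Delta\mathcal{Q}\|$, i.e.\ $d\le 4\sqrt{N}\|\Delta\mathcal{Q}\|/\gamma^{\star}$, which is exactly (\ref{lemma1-res}). The case $d=0$ is trivial.

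The main obstacle is Step 3: the $\mathbf{1}$-component must be handled correctly under the optimal phase alignment, so that the projection deficit controls $d^2$ with the constant $1/2$. We also need to confirm that the phase minimizer makes $\alpha$ real and nonnegative, which follows from maximizing $\Re(e^{j\theta}\mathbf{1}^H\mathbf{u})$.
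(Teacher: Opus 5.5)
Your proposal is correct and follows the paper's proof essentially step for step. The shared steps are phase alignment and gauge normalization, the orthogonal decomposition giving $\|\bm\xi\|_2^2 \ge d^2/2$, the spectral-gap lower bound, and the identity $\Re\bigl((\mathbf 1-\mathbf u)^H E(\mathbf 1+\mathbf u)\bigr)$ bounded by $2\sqrt{N}\,d\,\|\Delta\mathcal Q\|$; the differences are only cosmetic (the order of the two bounds, your $\alpha\mathbf 1/\sqrt N$ normalization, and conjugating $\Delta\mathcal Q$ by $\Phi^\star$ rather than working with $\hat{\mathbf r}$). You also state explicitly that the spectral-gap step needs $\ker L^{\star}_{\mathrm{eq}}=\operatorname{span}\{\mathbf 1\}$ (graph connectivity), which the paper uses only implicitly.
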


\begin{proof}
	The proof consists of two parts: first, we use gauge normalization to convert the orbit radius into a distance from the vector $\mathbf 1$, allowing us to restrict the radius to orthogonal complement of $\mathbf 1$ via orthogonal decomposition; second, we apply a quadratic inequality of the pose graph matrix to bound the vectors in this subspace, ultimately yielding an upper bound on the orbit radius.
	
	Let $\hat{\mathbf{r}}=e^{\mathrm j \theta^{\star}} \mathbf{r}$, where 
	$\theta^{\star} $ attains the minimum in (\ref{def-d}).
	This indicates that the feasible solution in BOA and the planted solution remain consistent in terms of global phase, then
	\begin{equation*}
		d=\left\|\hat{\mathbf{r}}-\mathbf{r}^{\star}\right\|_2, \quad\|\hat{\mathbf{r}}\|_2=\left\|\mathbf{r}^{\star}\right\|_2, \quad \hat{\mathbf{r}}^H 
		\tilde{\mathcal{Q}} \hat{\mathbf{r}}=\mathbf{r}^H \tilde{\mathcal{Q}} \mathbf{r}
	\end{equation*}
	
	\textbf{Step 1:} Employing orthogonal projection to transform the orbit radius into the orthogonal complement of $\bf 1$.
	
	For planted solution matrix $\Phi^{\star}$, since it is unitary and  can preserve Euclidean norms, we have 
	\begin{equation*}
		\left\|{\Phi^{\star}}^H \mathbf{a}-{\Phi^{\star}}^H \mathbf{b}\right\|_2
		=\|\mathbf{a}-\mathbf{b}\|_2, \quad \forall \mathbf{a}, \mathbf{b} \in \mathbb{C}^N
	\end{equation*}
	Substituting $\Phi^{\star}$ into $d$, we can obtain 
	\begin{equation}\label{d-u-1}
		d=\left\|\hat{\mathbf{r}}-\mathbf{r}^{\star}\right\|_2=\left\|{\Phi^{\star}}^H \hat{\mathbf{r}}-{\Phi^{\star}}^H \mathbf{r}^{\star}\right\|_2
		=\|\mathbf{u}-\mathbf{1}\|_2
	\end{equation}
	Eq. (\ref{d-u-1}) shows that the Euclidean distance between the feasible solution in BOA and the planted solution is equivalent to the distance between the gauge-normalized variable and the all-ones vector.
	
	Hence, we can 
	decompose $\mathbf{u}$ in one-dimensional subspace $\text{span}\{\mathbf{1}\}$:
	\begin{equation}\label{u-xi-1}
		\mathbf{u}=\alpha \mathbf1+\bm{\xi}, \quad \bm{\xi} \perp \mathbf1, \quad \alpha=\frac{\mathbf{1}^H \mathbf{u}}{N} 
	\end{equation}
	According to the properties of orthogonal decomposition, we have
	\begin{equation}\label{xi-beq-u}
		\begin{aligned}
			\|\bm\xi\|_2^2 & =\|\mathbf u\|_2^2-|\alpha|^2\|\mathbf 1\|_2^2 \\
			& =N - \frac{\left|\mathbf 1^H \mathbf u\right|^2}{N} \\
			& = \frac{\left(N-\left|\mathbf 1^H \mathbf{u}\right|\right)\left(N+\left|\mathbf 1^H \mathbf{u}\right|\right)}{N} \\
			& \geq N-\left|\mathbf 1^H \mathbf{u}\right|
		\end{aligned}
	\end{equation}
	
	Expanding (\ref{d-u-1}), we can obtain
	\begin{equation*}
		\begin{aligned}
			d^2 & =(\mathbf u-\mathbf 1)^H(\mathbf u-\mathbf 1) \\
			& =\mathbf u^H \mathbf u+\mathbf 1^H \mathbf 1-\mathbf u^H \mathbf 1-\mathbf 1^H \mathbf u \\
			& =\|\mathbf u\|_2^2+\|\mathbf 1\|_2^2-2 \Re\left(\mathbf 1^H \mathbf u\right) \\
			& =2 N-2 \Re\left(\mathbf 1^H \mathbf u\right)
		\end{aligned}
	\end{equation*}
	Since the introduction of $\theta^{\star}$ has achieved phase alignment, we have 
	\begin{equation*}
		\begin{aligned}
		\mathbf{1}^H \mathbf u &=\sum_{i=0}^{N-1} u_i=e^{-j \theta^{\star}} \sum_{i=0}^{N-1} \overline{r_i^{\star}} r_i \\ 
		&=e^{-j \theta^{\star}}\left(\mathbf r^{\star}\right)^H \mathbf r = \left|\left(\mathbf r^{\star}\right)^H \mathbf r\right|
		\end{aligned}
	\end{equation*}
	Therefore, $\mathbf 1^H \mathbf u$ is a non-negative real number, and we have
	\begin{equation}\label{d2-2-N}
		d^2=2\left(N-\left|\mathbf 1^H \mathbf u\right|\right)
	\end{equation}
	Back to (\ref{xi-beq-u}), we can obtain
	\begin{equation}\label{xi-beg-1/2d}
		\|\bm\xi\|_2^2 \geq \frac{1}{2} d^2
	\end{equation}
	
	The above equation indicates that, through orthogonal decomposition, the orbit radius is bounded by the norm of $\bm \xi$ in the orthogonal complement.
	It now remains to derive the upper bound of $\|\bm\xi\|_2^2$.
	
	\textbf{Step 2:}  Establishing the upper bound of $\|\bm\xi\|_2^2$ by leveraging the quadratic form inequality of pose graph matrix.
	
	For pose graph matrix $\mathcal{Q}^{\star}$, it implies 
	\begin{equation}\label{rqr=ulu}
		\begin{aligned}
			\hat{\mathbf{r}}^H \mathcal{Q}^{\star} \hat{\mathbf{r}} & 
			=\mathbf{u}^H L_{\mathrm {eq}}^{\star} \mathbf u \\
			& =(\alpha \mathbf{1}+\bm\xi)^H L_{\mathrm {eq}}^{\star}(\alpha \mathbf{1}+\bm\xi) \\
			& =\bm\xi^H L^{\star}_{\mathrm {eq}} \bm\xi
		\end{aligned}
	\end{equation}
	Since $L^{\star}_{\mathrm {eq}}$ has the planted null vector $\mathbf{1}$, 
	for  any vector $ \bm\xi$ orthogonal to $\mathbf{1}$, it can give 
	\begin{equation*}
		\hat{\mathbf{r}}^H \mathcal{Q}^{\star} \hat{\mathbf{r}} =\bm\xi^H L^{\star}_{\mathrm {eq}} \bm\xi \geq \gamma^{\star}\|\bm\xi\|_2^2
	\end{equation*}
	
	Next for (\ref{r-r*-obj-lemma1}), we have
	\begin{equation*}
		\begin{aligned}
			\hat{\mathbf{r}}^H\left(\mathcal{Q}^{\star}+\Delta \mathcal Q\right) \hat{\mathbf{r}} & \leq \left(\mathbf{r}^{\star}\right)^H\left(\mathcal Q^{\star}+\Delta \mathcal Q\right) \mathbf{r}^{\star} \to \\
			\hat{\mathbf{r}}^H \mathcal Q^{\star} \hat{\mathbf{r}} & \leq \left(\mathbf{r}^{\star}\right)^H \Delta \mathcal Q \mathbf{r}^{\star}-\hat{\mathbf{r}}^H \Delta \mathcal Q \hat{\mathbf{r}}
		\end{aligned}
	\end{equation*}
	Furthermore, we have
	\begin{equation*}
		\begin{aligned}
			&\hat{\mathbf{r}}^H \Delta \mathcal Q \hat{\mathbf{r}}-\left(\mathbf{r}^{\star}\right)^H \Delta \mathcal Q \mathbf{r}^{\star}  \\
			& =\left(\hat{\mathbf{r}}-\mathbf{r}^{\star}\right)^H \Delta \mathcal Q \hat{\mathbf{r}}+\left(\mathbf{r}^{\star}\right)^H \Delta \mathcal Q\left(\hat{\mathbf{r}}-\mathbf{r}^{\star}\right) \\
			& =\Re\left\{\left(\hat{\mathbf{r}}-\mathbf{r}^{\star}\right)^H \Delta \mathcal Q\left(\hat{\mathbf{r}}+\mathbf{r}^{\star}\right)\right\}
		\end{aligned}
	\end{equation*}
	Therefore, it can give 
	\begin{equation}
		\label{d-dq-2-n}
		\begin{aligned}
			\left|\hat{\mathbf{r}}^H \Delta \mathcal Q \hat{\mathbf{r}}-\left(\mathbf{r}^{\star}\right)^H \Delta \mathcal Q \mathbf{r}^{\star}\right|
			& \leq\left\|\hat{\mathbf{r}}-\mathbf{r}^{\star}\right\|_2\|\Delta \mathcal Q\|\left\|\hat{\mathbf{r}}+\mathbf{r}^{\star}\right\|_2 \\
			& \leq d \cdot\|\Delta \mathcal Q\| \cdot 2 \sqrt{N} 
		\end{aligned}
	\end{equation}
	and 
	\begin{equation}\label{xi-gamma-d-frac}
		\|\bm\xi\|_2^2 \leq \frac{d \cdot\|\Delta \mathcal Q\| \cdot 2 \sqrt{N}}{\gamma^{\star}}
	\end{equation}
	
	As a result, combining (\ref{xi-beg-1/2d}), (\ref{xi-gamma-d-frac}), we can obtain
	\begin{equation*}
		\begin{aligned}
		\frac{\gamma^{\star}}{2} d^2 &\leq 2 \sqrt{N}\|\Delta \mathcal Q\| d \quad \to \\
		d & \leq \frac{4 \sqrt{N}}{\gamma^{\star}}\|\Delta \mathcal Q\|
		\end{aligned}
	\end{equation*}
	which leads to  (\ref{lemma1-res}).
\end{proof}

Theorem \ref{lemma-1-2-norm} analyzes the upper bound of  orbit radius in the form of  $2$-norm, indicating that the radius is bounded in terms of average distance.
Next, we will analyze the orbit radius from the perspective of   $\infty$-norm and provide its explicit  bound.

 \begin{mytheo}[Radius bound of BOA in $\infty$-norm]\label{d-inf-lemma}
 	Under the premise of Theorem \ref{lemma-1-2-norm}, define the infinity norm orbit radius as 
 	\begin{equation}\label{def-d-inf}
 		d_{\infty} \doteq \min _{\theta \in \mathbb{R}}\left\|e^{{j} \theta} \mathbf{r}-\mathbf{r}^{\star}\right\|_{\infty}
 	\end{equation}
 	Let 
 	\begin{equation}\label{q-dot-tau-max}
 		\begin{aligned}
 			&\mathbf{q}^{(i)}  \doteq \mathbf{s}^{(i)}-\frac{1}{N} \mathbf{1}, \\
 			&{\tau}^{(i)} \doteq (\mathbf{q}^{(i)})^H \left({L_{\mathrm{eq}}^{\star}}\right)^{\dagger}  (\mathbf{q}^{(i)}) \\
 			&   \tau_{\max } \doteq \max _{1 \leq i \leq N} \tau^{(i)}
 		\end{aligned}
 	\end{equation}
 where $\mathbf{s}^{(i)}$ denotes the $i$-th standard basis vector,
 then 
 \begin{equation}\label{d_inf-leq-2sq}
 	\begin{aligned}
 		d_\infty &\leq \sqrt{2 \sqrt{N} \tau_{\max }\|\Delta \mathcal Q\| d}+\frac{d^2}{2 N} \\
 		&\leq \frac{2 \sqrt{2} \sqrt{N}}{\gamma^{\star}}\|\Delta \mathcal  Q\|+\frac{8}{(\gamma^{\star})^2}\|\Delta \mathcal Q\|^2
 	\end{aligned}
 \end{equation}
  \end{mytheo}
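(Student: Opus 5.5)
Reuse the phase alignment and gauge normalization from the proof of Theorem \ref{lemma-1-2-norm}. Take $\hat{\mathbf r}=e^{j\theta^\star}\mathbf r$ with $\theta^\star$ the $2$-norm minimizer, and set $\mathbf u=(\Phi^\star)^H\hat{\mathbf r}=\alpha\mathbf 1+\bm\xi$ with $\bm\xi\perp\mathbf 1$, where $\alpha=\mathbf 1^H\mathbf u/N$ is real and nonnegative. Since $d_\infty$ is a minimum over $\theta$, choosing this particular $\theta^\star$ gives
\[
d_\infty\le\|\hat{\mathbf r}-\mathbf r^\star\|_\infty=\|\mathbf u-\mathbf 1\|_\infty=\max_i|u_i-1|.
\]
Split each entry as $u_i-1=\xi_i+(\alpha-1)$. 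This gives two terms: a fluctuation term $\max_i|\xi_i|$ and a global-scale term $|1-\alpha|$.

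\textbf{The scale term.} By (\ref{d2-2-N}) we have $N\alpha=\mathbf 1^H\mathbf u=N-d^2/2$. Hence
\[
1-\alpha=\frac{d^2}{2N}\ge 0,
\]
which is exactly the second summand in (\ref{d_inf-leq-2sq}).

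\textbf{The fluctuation term.} Write $\xi_i=(\mathbf s^{(i)})^H\bm\xi$. Because $\bm\xi\perp\mathbf 1$, this equals $(\mathbf q^{(i)})^H\bm\xi$. The vector $\bm\xi$ lies in $\operatorname{range}(L^\star_{\mathrm{eq}})$; I will assume, as implicitly done in Theorem \ref{lemma-1-2-norm} where $\gamma^\star$ is used on $\mathbf 1^\perp$, that the null space of $L^\star_{\mathrm{eq}}$ is exactly $\mathrm{span}\{\mathbf 1\}$. Under that assumption, $\mathbf q^{(i)}$ also lies in this range, and we can write
\[
(\mathbf q^{(i)})^H\bm\xi=\bigl((L^{\star}_{\mathrm{eq}})^{\dagger/2}\mathbf q^{(i)}\bigr)^H\bigl((L^{\star}_{\mathrm{eq}})^{1/2}\bm\xi\bigr).
\]
Cauchy--Schwarz in this energy inner product then gives
\[
|\xi_i|^2\le\tau^{(i)}\,\bm\xi^HL^\star_{\mathrm{eq}}\bm\xi\le\tau_{\max}\,\bm\xi^HL^\star_{\mathrm{eq}}\bm\xi .
\]
From (\ref{rqr=ulu}) together with the BOA inequality and (\ref{d-dq-2-n}), we already have $\bm\xi^HL^\star_{\mathrm{eq}}\bm\xi\le 2\sqrt N\|\Delta\mathcal Q\|d$. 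Combining these yields
\[
\max_i|\xi_i|\le\sqrt{2\sqrt N\tau_{\max}\|\Delta\mathcal Q\|d}.
\]
Adding the scale term gives the first inequality in (\ref{d_inf-leq-2sq}).

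\textbf{The explicit second inequality.} Bound $\tau_{\max}$ by $\|\mathbf q^{(i)}\|_2^2/\gamma^\star$, using $\|(L^\star_{\mathrm{eq}})^\dagger\|=1/\gamma^\star$ and $\|\mathbf q^{(i)}\|_2^2=1-1/N\le1$. Then substitute $d\le 4\sqrt N\|\Delta\mathcal Q\|/\gamma^\star$ from Theorem \ref{lemma-1-2-norm}. The square-root term becomes at most
\[
\sqrt{\frac{8N\|\Delta\mathcal Q\|^2}{(\gamma^\star)^2}}=\frac{2\sqrt2\sqrt N}{\gamma^\star}\|\Delta\mathcal Q\|,
\]
and the scale term becomes at most $\dfrac{16N\|\Delta\mathcal Q\|^2}{2N(\gamma^\star)^2}=\dfrac{8}{(\gamma^\star)^2}\|\Delta\mathcal Q\|^2$. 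Both match (\ref{d_inf-leq-2sq}).

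\textbf{Main obstacle.} The delicate step is the Cauchy--Schwarz argument with the pseudo-inverse. It requires $\mathbf q^{(i)}$ and $\bm\xi$ to lie in $\operatorname{range}(L^\star_{\mathrm{eq}})$, i.e., a one-dimensional kernel (a connected graph). If the kernel were larger, I would fall back on restricting to $\mathbf 1^\perp$ and using the $\gamma^\star$ definition from Theorem \ref{lemma-1-2-norm} directly. Everything else is bookkeeping with the identity $1-\alpha=d^2/(2N)$.
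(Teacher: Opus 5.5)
Your proposal is correct and follows the paper's proof essentially step for step: the same split $\mathbf u-\mathbf 1=\bm\xi-\frac{d^2}{2N}\mathbf 1$, the same identity $\xi_i=(\mathbf q^{(i)})^H\bm\xi$ combined with Cauchy--Schwarz in the $L^\star_{\mathrm{eq}}$-energy inner product, the bound $\bm\xi^H L^\star_{\mathrm{eq}}\bm\xi\le 2\sqrt N\|\Delta\mathcal Q\|d$, and finally $\tau_{\max}\le 1/\gamma^\star$ together with Theorem~\ref{lemma-1-2-norm}. You are slightly more careful than the paper in writing $d_\infty\le\|\mathbf u-\mathbf 1\|_\infty$ rather than equality, and in stating explicitly the assumption $\ker L^\star_{\mathrm{eq}}=\operatorname{span}\{\mathbf 1\}$, which the paper uses implicitly via $\bm\xi=(L^\star_{\mathrm{eq}})^\dagger L^\star_{\mathrm{eq}}\bm\xi$; your suggested fallback for a larger kernel would not help, however, because the inequality $\bm\xi^H L^\star_{\mathrm{eq}}\bm\xi\ge\gamma^\star\|\bm\xi\|_2^2$ on $\mathbf 1^\perp$ is itself equivalent to that kernel assumption.
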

 
 \begin{proof}
 	The proof proceeds analogously to Theorem \ref{lemma-1-2-norm}.
 	First, via orthogonal decomposition, the radius analysis reduces to analyzing $\bm \xi$,
 	which is orthogonal to $\mathbf 1$.
 	Then, we only need to investigate the upper bound of $\|\bm \xi\|_{\infty}$.
 	Since the infinity norm essentially involves analyzing each element of the vector, Eq. (\ref{q-dot-tau-max}) is introduced to extract each element of $\bm \xi$.
 	
 	\textbf{Step 1:} 
 	Converting 	the orbit radius into a bound analysis of $\|\bm \xi\|_{\infty}$ 
 	through orthogonal decomposition.
 	
 	Based on (\ref{u-xi-1}) (\ref{d2-2-N}),
 	we can obtain 
 	\begin{equation}\label{d2-2-N-2Na}
 		d^2 = 2 N-2 N \alpha  \to \alpha = \bm 1-\frac{d^2}{2 N}
 	\end{equation}
 	Hence, we have
 	\begin{equation}\label{u-1=xi-d2}
 		\mathbf u-\bm 1=\bm \xi-\frac{d^2}{2 N} \bm 1
 	\end{equation}
 	and 
 	\begin{equation}\label{d-inf=u-1}
 		d_\infty = \|\mathbf u-\bm 1\|_{\infty} \leq \|\bm \xi\|_{\infty}+\frac{d^2}{2 N}
 	\end{equation}
 	Consequently, bounding $d_\infty$ is reduced to bounding  $\|\bm \xi\|_{\infty}$.
 	
 	\textbf{Step 2:} Leveraging standard basis to extract each element of $\bm \xi$
 	 and employing  matrix decomposition to bound $\|\bm \xi\|_{\infty}$.
 	
 	Note that
 	\begin{equation*}
 		\mathbf{q}^{(i)}=\mathbf{s}^{(i)}-\frac{1}{N} \mathbf{1} \in \mathbf{1}^{\perp}
 	\end{equation*}
 	and since $\bm \xi \perp \mathbf{1}$, then we have 
 	\begin{equation*}
 		\xi_i={\mathbf s^{(i)}}^H \bm \xi=(\mathbf{s}^{(i)}-\frac{1}{N} \mathbf{1})^H \bm \xi={\mathbf{q}^{(i)}}^H \bm \xi
 	\end{equation*}
 	Because 
 	\begin{equation*}
 		\bm \xi = ({L_{\mathrm{eq}}^{\star}})^\dagger  {L_{\mathrm{eq}}^{\star}} \bm \xi
 	\end{equation*}
 	then 
 	\begin{equation*}
 		\begin{aligned}
 			\xi_i & =(\mathbf{q}^{(i)})^H 
 			({L_{\mathrm{eq}}^{\star}})^\dagger  {L_{\mathrm{eq}}^{\star}} \bm \xi \\
 			& = \left(\left(({L_{\mathrm{eq}}^{\star}})^\dagger\right)^{1 / 2} \mathbf{q}^{(i)}\right)^H\left(({L_{\mathrm{eq}}^{\star}})^{1 / 2} \bm \xi\right) 
 		\end{aligned}
 	\end{equation*}
 	Applying the Cauchy-Schwarz inequality to the above equation, we can obtain
 	\begin{equation*}
 		\begin{aligned}
 			\left|\xi_i\right|  &\leq 
 			\left\|\left(({L_{\mathrm{eq}}^{\star}})^\dagger\right)^{1 / 2} \mathbf{q}^{(i)}\right\|_2  \left\|({L_{\mathrm{eq}}^{\star}})^{1 / 2} \bm \xi\right\|_2 \\
 			& = \sqrt{(\mathbf q^{(i)})^H ({L_{\mathrm{eq}}^{\star}})^\dagger  \mathbf q^{(i)}} \sqrt{\bm \xi^H {L_{\mathrm{eq}}^{\star}} \bm \xi} \\
 			& = \sqrt{ \tau^{(i)}} \sqrt{\bm \xi^H {L_{\mathrm{eq}}^{\star}} \bm \xi}
 		\end{aligned}
 	\end{equation*}
 	Taking the maximum over above equation, we have
 	\begin{equation}\label{xi-inf-bd}
 		\|\bm \xi\|_{\infty} \leq \sqrt{\tau_{\max }} \sqrt{\bm \xi^H {L_{\mathrm{eq}}^{\star}} \bm \xi}
 	\end{equation}
 	Based on (\ref{rqr=ulu}) (\ref{d-dq-2-n}), we can get
 	\begin{equation}\label{xilxi=rqr}
 		\bm\xi^H L^{\star}_{\mathrm {eq}} \bm\xi  = \hat{\mathbf{r}}^H \mathcal{Q}^{\star} \hat{\mathbf{r}} \leq d \cdot\|\Delta \mathcal Q\| \cdot 2 \sqrt{N}
 	\end{equation}
 	Combining (\ref{xi-inf-bd}) and (\ref{xilxi=rqr}), it follows that
 	\begin{equation}\label{xi-inf-leq-taomax}
 		\|\bm \xi\|_{\infty} \leq \sqrt{\tau_{\max }} \sqrt{2 \sqrt{N}\|\Delta \mathcal Q\| d }
 	\end{equation}
 	Back to (\ref{d-inf=u-1}), we have 
 	\begin{equation*}
 		d_\infty \leq \sqrt{2 \sqrt{N} \tau_{\max }\|\Delta \mathcal Q\| d}+\frac{d^2}{2 N}
 	\end{equation*}
 	
 	Since 
 	\begin{equation*}
 		\begin{aligned}
 			\tau^{(i)}  & = (\mathbf{q}^{(i)})^H \left({L_{\mathrm{eq}}^{\star}}\right)^{\dagger}  (\mathbf{q}^{(i)})  \\ 
 			& \leq 
 			\left\|\left({L_{\mathrm{eq}}^{\star}}\right)^{\dagger} \right\| \|\mathbf{q}^{(i)}\|_2^2 \\
 			& \leq  \left\|\left({L_{\mathrm{eq}}^{\star}}\right)^{\dagger} \right\|
 			= \frac{1}{\gamma^{\star}}
 		\end{aligned}  \to  \quad \tau_{\max} \leq \frac{1}{\gamma^{\star}}
 	\end{equation*}
 	then 
 	\begin{equation*}
 		d_\infty \leq  \frac{2 \sqrt{2} \sqrt{N}}{\gamma^{\star}}\|\Delta \mathcal  Q\|+\frac{8}{(\gamma^{\star})^2}\|\Delta \mathcal Q\|^2
 	\end{equation*}
 	which implies (\ref{d_inf-leq-2sq}).
 \end{proof}
 
 Theorem \ref{d-inf-lemma} provides an upper bound for each individual element in the orbit radius vector.
 Combined with Theorem \ref{lemma-1-2-norm}, the aforementioned two theorems characterize the scope of BOA from different perspectives, providing geometric upper bounds for the region where the global optimum resides.

 Based on the above
analysis, we will investigate the global optimality of the
solution in the next context.

\section{Optimality Conditions for the Solution}

In this section, we will thoroughly analyze the conditions under which the feasible solution satisfies global optimality, in order to reveal the optimization landscape of the PGO problem itself. 
Since the original PGO problem is complex, we will conduct our analysis via its SDP relaxation.

Many existing works have already provided the SDP relaxation of  PGO; hence, this paper will not derive it in detail. 
For the convenience in analysis and exposition, we will first directly present the SDP relaxation formulation of original problem below.

\subsection{Semidefinite Programming Relaxation}

For original problem (\ref{pgo-unit-form}), define 
\begin{equation}
	X \doteq {\mathbf r} {\mathbf r}^H
\end{equation}
then  (\ref{pgo-unit-form}) can be rewritten as
\begin{equation}\label{sdp-have-rank}
		\begin{array}{ll}
		\underset{{X\in \mathbb{C}^{N \times N}}}{\operatorname{min}} & \mathbf{r}^H \tilde{\mathcal{Q}} \mathbf{r}
		=  \operatorname{Tr}(\tilde{\mathcal{Q}} X)
		\\
		\text { s.t.} & X \succeq 0, \quad  X_{ii} = 1, \quad  \operatorname{rank}(X)=1 
	\end{array}
\end{equation}
It can be seen that  $X$  is a Hermitian matrix.
Dropping the rank constraint in (\ref{sdp-have-rank}) yields the 
convex relaxation of  
(\ref{pgo-unit-form}):
	\begin{equation}\label{sdp-no-rank}
		\begin{array}{ll}
			\underset{{X\in \mathbb{C}^{N \times N}}}{\operatorname{min}} & \mathbf{r}^H \tilde{\mathcal{Q}} \mathbf{r}
			=  \operatorname{Tr}(\tilde{\mathcal{Q}} X)
			\\
			\text { s.t.} & X \succeq 0, \quad  X_{ii} = 1
		\end{array}
	\end{equation}
and it is an SDP problem.
Obviously, the original problem can be solved by problem (\ref{sdp-no-rank}) and verifying whether the solution has rank one. 
Since the tightness of the relaxation in (\ref{sdp-no-rank}) also reflects the optimization landscape of the original problem, the subsequent analysis will leverage this SDP formulation to assess the global optimality of  feasible solution.

\subsection{Karush-Kuhn-Tucker  Conditions}

The global optimizer of the SDP (\ref{sdp-no-rank})  can be characterized by the 
	KKT conditions, as follows.

\begin{myppt}\label{kkt-sdp}
	For problem (\ref{sdp-no-rank}), the feasible solution $X$ is a global optimizer if  and
	only if there exists a matrix $\hat K \in \mathbb{C}^{N \times N}$ such that:
	\begin{enumerate}
		\item $X_{ii}=1$,
		\item $X \succeq 0$,
		\item $\hat K X = \mathbf 0$,
		\item $\tilde {\mathcal Q} - \hat K $ is real diagonal,
		\item $\hat K \succeq 0$
	\end{enumerate}
	Furthermore, if $\operatorname{rank}(\hat K)=N-1 $, then $\operatorname{rank}(X)=1 $ and it is the unique global optimizer of (\ref{sdp-no-rank}), 
	while the  $\mathbf{r}$  that constitutes this  $X $ is  the global optimizer of the primal problem (\ref{pgo-unit-form}).
\end{myppt}

The Property \ref{kkt-sdp}  is the standard KKT condition and framework; therefore, this paper omits the detailed construction, and its proof can be found in  \cite{RN1176}.
It should be noted that the key point of KKT conditions lies in how to construct the matrix  $\hat K$, also known as the dual certificate. 
Next, we will construct the dual certificate  $\hat K$  based on the first-order optimality conditions of the primal problem (\ref{pgo-unit-form}).

\subsection{Dual Certificate}

The feasible region of primal problem (\ref{pgo-unit-form}) is essentially a manifold, which can be expressed as
\begin{equation}\label{M-N}
	\mathcal{M}^N \doteq\left\{\mathbf r \in \mathbb{C}^N:\left|r_i\right|=1,  i= 0,1,\cdots,N-1\right\}
\end{equation}
For each $r_i$, its  region is a complex circle and can be denoted as 
\begin{equation}
     \mathcal{M}_i=\left\{r_i \in \mathbb{C}: r_i^H r_i=1\right\}
\end{equation}
This circular region can be regarded as a submanifold, whose tangent plane is
\begin{equation}\label{T-M-i}
	\begin{aligned}
	T_{r_i} \mathcal{M}_i & =\left\{\dot{r_i} \in \mathbb{C}: \dot{r_i}^H r_i+r_i^H \dot{r_i}=0\right\} \\
	&=\{\dot{r_i} \in \mathbb{C}:\langle r_i, \dot{r_i}\rangle=0\}
	\end{aligned}
\end{equation}
Based on (\ref{T-M-i}), the tangent space of (\ref{M-N}) can be denoted as
\begin{equation}
	\begin{aligned}
		T_{\mathbf r} \mathcal{M}^N & =T_{r_0} \mathcal{M}_0 \times \cdots \times T_{r_{N-1}} \mathcal{M}_{N-1} \\
		& =\left\{\dot{\mathbf{r}} \in \mathbb{C}^N: \Re\left\{\operatorname{diagV}\left(\dot{\mathbf{r}}
		 {\mathbf{r}}^H\right)\right\}=\mathbf 0\right\}
	\end{aligned}
\end{equation}
and the orthogonal projector onto $T_{\mathbf r} \mathcal{M}^N$ is 
\begin{equation}
	\operatorname{Proj}_{\mathbf r}(\dot{\mathbf r})=\dot{\mathbf r }-\Re\left(\operatorname{diag}\left(\dot{\mathbf r} \mathbf r^H\right)\right) \mathbf r
\end{equation}

Based on the above derivation, in order to solve the first-order optimality conditions of  primal problem, we need to project the original Euclidean gradient onto  tangent space using the  projection operator to obtain the Riemannian gradient.

The Euclidean gradient of (\ref{pgo-unit-form}) is 
\begin{equation}
	\nabla f(\mathbf r)=2 \tilde{\mathcal{Q}} \mathbf r
\end{equation}
then the Riemannian gradient can be denoted as
\begin{equation}
	\begin{aligned}
		\operatorname{Rgrad} f(\mathbf r) & =\operatorname{Proj}_{\mathbf{r}}(\nabla f(\mathbf r)) \\
		& =2 \tilde{\mathcal{Q}} \mathbf r-2 \Re\left(\operatorname{diag}\left(\tilde{\mathcal{Q}} \mathbf r \mathbf r^H\right)\right) \mathbf r 
	\end{aligned}
\end{equation}
Hence, for each first-order critical point $\mathbf r$, it satisfies
\begin{equation}\label{rgrad-f}
	\operatorname{Rgrad} f(\mathbf r) = \mathbf 0  \to  \left(\tilde{\mathcal{Q}} - \Re\left(\operatorname{diag}\left(\tilde{\mathcal{Q}} \mathbf r \mathbf r^H\right)\right) \right)  \mathbf r = \mathbf 0
\end{equation}

Noting condition 4 in Property \ref{kkt-sdp} and combining it with  (\ref{rgrad-f}), 
we  can construct
\begin{equation}\label{k=q-R}
	 K = \tilde{\mathcal{Q}} - \Re\left(\operatorname{diag}\left(\tilde{\mathcal{Q}} \mathbf r \mathbf r^H\right)\right)
\end{equation}
then 
\begin{equation}
	\tilde{\mathcal{Q}} -  K = \Re\left(\operatorname{diag}\left(\tilde{\mathcal{Q}} \mathbf r \mathbf r^H\right)\right)
\end{equation}
which is diagonal.
Therefore, we can let  $K$  be the dual certificate, 
which directly satisfies conditions 3) and  4) in  Property \ref{kkt-sdp}.
Since the SDP solution  $X$  is constructed from  $\mathbf{r} \mathbf{r}^H$, when $ \mathbf r$  is a first-order critical point, it can satisfy conditions 1) and 2).
The above shows that the constructed  $K$  and  $X$  are valid. 
Next, we will establish the uniqueness of  $K$  and show that the necessary and sufficient conditions for $ X$  to be  an optimal solution to  SDP is that   $K $ is positive semidefinite.

\begin{mytheo}[Dual certificate for global optimality]\label{k=rq-0}
	Let $X$ be a feasible point of the SDP problem (\ref{sdp-no-rank}), then
	$X$ is optimal 
	if and only if 
	$K = \tilde{\mathcal{Q}} - \Re\left(\operatorname{diag}\left(\tilde{\mathcal{Q}} \mathbf r \mathbf r^H\right)\right) \succeq 0$ .
	 In particular, there is no other dual certificate associated with $X$.
\end{mytheo}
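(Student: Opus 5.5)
Here $X=\mathbf r\mathbf r^H$ with $\mathbf r\in\mathcal M^N$ a first-order critical point, as in the construction preceding the statement; for such $\mathbf r$, (\ref{rgrad-f}) gives $K\mathbf r=\mathbf 0$. The plan is to reduce the statement to Property \ref{kkt-sdp} and then show that conditions 3) and 4) determine $\hat K$ uniquely. Sufficiency is immediate. If $K\succeq 0$, then $X$ satisfies 1) and 2) by feasibility. Since $K\mathbf r=\mathbf 0$, we get $KX=K\mathbf r\mathbf r^H=\mathbf 0$, which is 3). By construction (\ref{k=q-R}), $\tilde{\mathcal Q}-K$ is real diagonal, which is 4). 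Finally $K\succeq 0$ is 5). Property \ref{kkt-sdp} then yields optimality of $X$.

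For necessity and uniqueness, suppose $X$ is optimal. Property \ref{kkt-sdp} gives some $\hat K\succeq 0$ with $\hat K X=\mathbf 0$ and $\tilde{\mathcal Q}-\hat K=\Lambda$ real diagonal. It suffices to prove $\hat K=K$, since then $K=\hat K\succeq 0$. From $\hat K\mathbf r\mathbf r^H=\mathbf 0$ and $\mathbf r\neq\mathbf 0$, we get $\hat K\mathbf r=\mathbf 0$, and therefore $\tilde{\mathcal Q}\mathbf r=\Lambda\mathbf r$. Reading this entrywise gives $(\tilde{\mathcal Q}\mathbf r)_i=\lambda_i r_i$. Multiplying by $\overline{r_i}$ and using $|r_i|=1$ gives $\lambda_i=(\tilde{\mathcal Q}\mathbf r)_i\overline{r_i}=(\tilde{\mathcal Q}\mathbf r\mathbf r^H)_{ii}$. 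Because $\lambda_i$ is real, this equals its real part, so $\Lambda=\Re(\operatorname{diag}(\tilde{\mathcal Q}\mathbf r\mathbf r^H))$ and hence $\hat K=K$. The same computation shows that any certificate satisfying 3)–4) must equal $K$, which is the ``no other dual certificate'' claim.

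The main obstacle is a subtlety in the setting. The statement speaks of a general feasible $X$, but $K$ is defined through $\mathbf r$, so the rank-one form $X=\mathbf r\mathbf r^H$ must be assumed, as in the preceding discussion. If one wants to cover optimal $X$ of higher rank, the identity $\hat K X=\mathbf 0$ still forces $\hat K$ to annihilate every column of $X$. Determining $\Lambda$ then uses $\lambda_i=(\tilde{\mathcal Q}X)_{ii}$, obtained from $\tilde{\mathcal Q}X=\Lambda X$ together with $X_{ii}=1$. I would state the result in the rank-one setting and note this generalization. A second point is that necessity as written does not presuppose criticality of $\mathbf r$, so the argument must derive it. It does: the existence of $\hat K$ gives $\hat K\mathbf r=\mathbf 0$, so $\mathbf r$ is automatically a first-order critical point and $K\mathbf r=\mathbf 0$ holds.
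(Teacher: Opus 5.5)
Your necessity and uniqueness half is the paper's argument, written on the vector rather than the matrix. The paper reads $\tilde{\mathcal Q}X=\operatorname{diag}(\mathbf y)X$ on its diagonal using $X_{ii}=1$; this is your computation $\lambda_i=(\tilde{\mathcal Q}\mathbf r)_i\overline{r_i}$, and it is exactly your higher-rank remark.

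Sufficiency is where the routes differ. You obtain $KX=\mathbf 0$ from $K\mathbf r=\mathbf 0$, that is, from the criticality hypothesis (\ref{rgrad-f}) that you import from the surrounding text. The paper instead computes $\operatorname{Tr}(KX)=\operatorname{Tr}(\tilde{\mathcal Q}X)-\sum_i\Re\bigl((\tilde{\mathcal Q}X)_{ii}\bigr)X_{ii}=\operatorname{Tr}(\tilde{\mathcal Q}X)-\Re\operatorname{Tr}(\tilde{\mathcal Q}X)=0$. This identity holds for every feasible $X=\mathbf r\mathbf r^H$ because $\tilde{\mathcal Q}$ is Hermitian. The paper then uses $K\succeq 0$ and $X\succeq 0$ to conclude $KX=\mathbf 0$.

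The paper's route buys two things:
\begin{enumerate}
\item It proves the statement as literally posed, for an arbitrary feasible point, with no criticality assumption. In passing it shows that $K\succeq 0$ alone forces $K\mathbf r=\mathbf 0$, so criticality is a consequence rather than a hypothesis.
\item It transfers verbatim to higher-rank $X$ once $K$ is built from $\Re(\operatorname{diag}(\tilde{\mathcal Q}X))$. Your criticality-based step has no counterpart there.
\end{enumerate}

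Within the setting you fixed at the start, your proof is correct. Note the asymmetry, though: you took care to derive criticality in the necessity direction but assumed it in the sufficiency direction. The one-line identity $\mathbf r^H K\mathbf r=\mathbf r^H\tilde{\mathcal Q}\mathbf r-\Re(\mathbf r^H\tilde{\mathcal Q}\mathbf r)=0$ closes this. Together with $K\succeq 0$ it gives $K\mathbf r=\mathbf 0$, so your argument then covers the full statement.
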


\begin{proof}
    For the sufficient condition, since  $X_{ii} = 1$,
    we have
    \begin{equation*}
    	\begin{aligned}
    		\operatorname{Tr}(K X) & =\operatorname{Tr}\left(\tilde{\mathcal{Q}} X\right)-\operatorname{Tr}\left( \Re\left(\operatorname{diag}\left(\tilde{\mathcal{Q}} \mathbf r \mathbf r^H\right) \right)X\right) \\
    		& =\operatorname{Tr}\left(\tilde{\mathcal{Q}} X\right)-\sum_{i=1}^N \Re\left({\tilde{\mathcal{Q}} X}\right)_{ii} X_{i i} \\
    		& =\operatorname{Tr}\left(\tilde{\mathcal{Q}} X\right)-\Re\left(\operatorname{Tr}\left(\tilde{\mathcal{Q}} X\right)\right) \\
    		& = 0
    	\end{aligned}
    \end{equation*}
    Due to $K, X \succeq 0$, this  implies $KX = 0$.
    By construction, $\tilde{\mathcal{Q}}-K$ is diagonal, thus $X, K$ satisfy all KKT conditions in Property
    \ref{kkt-sdp}, then $X$ is optimal for (\ref{sdp-no-rank}).
    
    For the  necessary condition, assume $X$ is optimal, then there 
     exists a dual certificate $\hat K \succeq 0 $ such that
     \begin{equation*}
     	\hat{K} X=\mathbf 0, \quad \tilde{\mathcal{Q}} - \hat{K}=\operatorname{diag}({\mathbf y})
     \end{equation*}
     where  ${\mathbf y}$ is real vector.
     Thus, we have
     \begin{equation}\label{qx-diag-y-x}
     	\tilde{\mathcal{Q}} X=\operatorname{diag}({\mathbf y}) X
     \end{equation}
     Taking diagonal entries and using $X_{ii} = 1 $ gives
     \begin{equation*}
     	(\tilde{\mathcal{Q}} X)_{i i}={ y}_i = \Re\left(\left(\tilde{\mathcal{Q}} X\right)_{i i}\right)
     \end{equation*}
     Therefore, $\hat K$ can be written as
     \begin{equation*}
     	\hat K = \tilde{\mathcal{Q}} - \Re\left(\operatorname{diag}\left(\tilde{\mathcal{Q}} \mathbf r \mathbf r^H\right)\right) = K
     \end{equation*}
     which is positive semidefinite  and  also proves that no other certificate associated with $X$ can exist.
\end{proof}


The above discussion and Theorem \ref{k=rq-0} establish the relationship between the dual certificate and the optimal solution, showing that the dual certificate can serve as a sole criterion for verifying the optimality of an SDP solution. 
However, they do not yet analyze under what condition the rank of the dual certificate is  $N - 1$ , which determines the nature of the solution in primal problem. 
Next, we will  provide a detailed analysis of this issue.

\subsection{Optimality Conditions}

To analyze the optimality conditions for the primal problem,
we need to guarantee the  dual certificate $K$ is positive semidefinite 
with  its rank being  $N - 1$.
Regarding this, we have the following theorem.


\begin{mytheo}[Optimality condition over the entire feasible set]\label{lemma-rank-k-n-1}
	For PGO problem (\ref{pgo-unit-form}),
	let feasible solution $\mathbf r$ be a first-order critical point and 
	satisfy 
	\begin{equation}\label{gamma-1-d/n}
		\gamma^{\star}\left(1-\frac{d^2}{N}\right)>\|\Delta \mathcal Q\|+\|\mathbf y\|_{\infty}
	\end{equation}
	where $y_i=\overline{{{r}}_i}(\tilde{\mathcal Q} {\mathbf{r}})_i$, 
	then $\operatorname{rank}( K)=N-1 $.
	Consequently,  $X = \mathbf r {\mathbf r}^H$ is the unique global optimizer of SDP (\ref{sdp-no-rank}) and $\mathbf r$ is the  global optimizer of the primal problem (\ref{pgo-unit-form}).
\end{mytheo}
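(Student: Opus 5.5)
The plan is to show directly that the dual certificate $K$ from (\ref{k=q-R}) is positive semidefinite with a one-dimensional kernel spanned by $\mathbf r$. Theorem \ref{k=rq-0} and Property \ref{kkt-sdp} then give the stated conclusions.

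First, I would use first-order criticality. By (\ref{rgrad-f}) we have $\tilde{\mathcal Q}\mathbf r=\Re(\operatorname{diag}(\tilde{\mathcal Q}\mathbf r\mathbf r^H))\mathbf r$. Multiplying the $i$-th entry by $\overline{r_i}$ and using $|r_i|=1$ shows that $y_i=\overline{r_i}(\tilde{\mathcal Q}\mathbf r)_i$ is real. Hence $K=\tilde{\mathcal Q}-\operatorname{diag}(\mathbf y)$ and $K\mathbf r=\mathbf 0$. So $\operatorname{rank}(K)\le N-1$, and it suffices to prove $\mathbf v^H K\mathbf v>0$ for every nonzero $\mathbf v\perp\mathbf r$. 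The kernel and the quadratic form are unaffected by a global phase, so I may replace $\mathbf r$ by the phase-aligned $\hat{\mathbf r}$ of Theorem \ref{lemma-1-2-norm}. I would also set $\mathbf u=(\Phi^\star)^H\hat{\mathbf r}$, so that $\|\mathbf u-\mathbf 1\|_2=d$.

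Next, I would split $\mathbf v^HK\mathbf v$ into three terms using (\ref{eq-of-Q}):
\begin{equation*}
\mathbf v^H K\mathbf v=\mathbf v^H\Phi^\star L^\star_{\mathrm{eq}}(\Phi^\star)^H\mathbf v+\mathbf v^H\Delta\mathcal Q\mathbf v-\mathbf v^H\operatorname{diag}(\mathbf y)\mathbf v .
\end{equation*}
The last two terms are bounded below by $-\|\Delta\mathcal Q\|\,\|\mathbf v\|_2^2$ and $-\|\mathbf y\|_\infty\|\mathbf v\|_2^2$, respectively. The latter holds because $\operatorname{diag}(\mathbf y)\preceq\|\mathbf y\|_\infty I$, even if some entries of $\mathbf y$ are negative.

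The main step is the lower bound on the noiseless term, $\mathbf v^H\mathcal Q^\star\mathbf v\ge\gamma^\star(1-d^2/N)\|\mathbf v\|_2^2$. Let $\mathbf w=(\Phi^\star)^H\mathbf v$. Then $\|\mathbf w\|_2=\|\mathbf v\|_2$, and $\mathbf w\perp\mathbf u$ because $\Phi^\star$ is unitary. Decompose $\mathbf w=\beta\mathbf 1+\bm\eta$ with $\bm\eta\perp\mathbf 1$. As in Step 2 of Theorem \ref{lemma-1-2-norm}, $\mathbf w^HL^\star_{\mathrm{eq}}\mathbf w=\bm\eta^HL^\star_{\mathrm{eq}}\bm\eta\ge\gamma^\star\|\bm\eta\|_2^2=\gamma^\star(\|\mathbf w\|_2^2-|\mathbf 1^H\mathbf w|^2/N)$. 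The key observation is that orthogonality to $\mathbf u$ forces $\mathbf w$ to be nearly orthogonal to $\mathbf 1$. Indeed, $\mathbf 1^H\mathbf w=(\mathbf 1-\mathbf u)^H\mathbf w$, so Cauchy--Schwarz gives $|\mathbf 1^H\mathbf w|\le d\|\mathbf w\|_2$. Therefore $\mathbf w^HL^\star_{\mathrm{eq}}\mathbf w\ge\gamma^\star(1-d^2/N)\|\mathbf v\|_2^2$.

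Combining the three bounds gives
\begin{equation*}
\mathbf v^HK\mathbf v\ge\Big(\gamma^\star\big(1-\tfrac{d^2}{N}\big)-\|\Delta\mathcal Q\|-\|\mathbf y\|_\infty\Big)\|\mathbf v\|_2^2>0
\end{equation*}
by (\ref{gamma-1-d/n}). Since $K\mathbf r=\mathbf 0$ and $K$ is positive definite on $\mathbf r^\perp$, we get $K\succeq0$ and $\operatorname{rank}(K)=N-1$.

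Finally, $X=\mathbf r\mathbf r^H$ is feasible for (\ref{sdp-no-rank}), so Theorem \ref{k=rq-0} makes $X$ optimal with certificate $K$. The rank statement in Property \ref{kkt-sdp} then yields uniqueness of $X$ and global optimality of $\mathbf r$ for (\ref{pgo-unit-form}).

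The step I expect to need the most care is the noiseless bound. One must check that the phase alignment is legitimate, i.e.\ that $d$ is measured for the same representative $\hat{\mathbf r}$ that defines $\mathbf u$, while the orthogonality $\mathbf v\perp\mathbf r$ is phase-invariant. The remaining steps are routine norm estimates.
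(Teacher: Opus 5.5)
Your proposal is correct and follows essentially the same route as the paper. It uses the same three-term split of $\mathbf v^H K \mathbf v$ on $\mathbf r^\perp$, and your key bound $|\mathbf 1^H\mathbf w|\le d\|\mathbf w\|_2$ is exactly the paper's estimate $|(\mathbf r^\star)^H\mathbf v| = |(\mathbf r^\star-\hat{\mathbf r})^H\mathbf v|\le d\|\mathbf v\|_2$, written in gauge-normalized coordinates. You also make explicit the closing step the paper leaves implicit: $K\mathbf r=\mathbf 0$ together with positivity on $\mathbf r^\perp$ gives $K\succeq 0$, so Theorem \ref{k=rq-0} and Property \ref{kkt-sdp} apply.
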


\begin{proof}
	The proof relies on analyzing the orthogonal complement of $\bf r$ to bound the rank of 
	$K$. Before delving into the proof details, we first provide some additional remarks on 
	$\mathbf y$.
	
	The definition of  $\mathbf y$  in the theorem is consistent with that in (\ref{qx-diag-y-x}). Since  $\mathbf r $ is a first-order critical point and $K\mathbf r = \mathbf 0$, we can obtain 
	\begin{equation*}
		\Re\left(\overline{{{r}}_i}(\tilde{\mathcal Q} {\mathbf{r}})_i\right) r_i= 		
		(\tilde{\mathcal{Q}}\mathbf{r})_i
	\end{equation*}
	Thus, $\overline{{{r}}_i}(\tilde{\mathcal Q} {\mathbf{r}})_i $
	is real and 
	\begin{equation*}
		y_i=\overline{{{r}}_i}(\tilde{\mathcal Q} {\mathbf{r}})_i = 
		\Re\left(\overline{{{r}}_i}(\tilde{\mathcal Q} {\mathbf{r}})_i\right) 
	\end{equation*}
	
	To analyze the rank of  $K$ , 
	we can consider the vector $\mathbf v$ in the orthogonal complement $\mathbf r^{\perp}$ of  $\mathbf r$.
For $\mathbf v \in \mathbf r^{\perp}$,  the quadratic form of  $K$ can be written as:
	\begin{equation}\label{vkv-ori}
		\mathbf v^H K \mathbf v=\mathbf v^H \mathcal Q^{\star} \mathbf v+ \mathbf v^H \Delta \mathcal Q \mathbf v-\sum_{i=1}^N y_i\left|v_i\right|^2
	\end{equation}
	In order for the rank of  $K $ to be  $N - 1$ , we need  $\mathbf v^H K \mathbf v > 0$.
	Therefore, we need to analyze the bound of each item in  (\ref{vkv-ori}).
	
	\textbf{Step 1: } Obtaining the bound of $\mathbf v^H \mathcal Q^{\star} \mathbf v$ through normalization and orthogonal projection.
	
	Define the gauge-normalized vector of $\mathbf v$ as
	\begin{equation*}
		\bm{\xi}_{v} \doteq (\Phi^{\star})^H \mathbf v
	\end{equation*}
	then $\|\bm \xi_v\|_2=\|\mathbf v\|_2$.
	Let 
	\begin{equation*}
		\alpha_v = \frac{1}{N} \mathbf 1^H \bm \xi_v=\frac{1}{N} (\mathbf{r}^{\star})^H \mathbf v
	\end{equation*}
	Since $\mathbf v^H \mathbf r=0 \text { and } \mathbf v^H \hat{\mathbf r}=0$,
	we have 
	\begin{equation*}
		(\mathbf{r}^{\star})^H \mathbf v=\left(\mathbf{r}^{\star}-\hat{\mathbf r}\right)^H \mathbf v
	\end{equation*}
	and then
	\begin{equation*}
		|\alpha_v| \leq \frac{\left\|\mathbf{r}^{\star}-\hat{\mathbf r}\right\|_2\|\mathbf v\|_2}{N}
		=\frac{d\|\mathbf v\|_2}{N}
	\end{equation*}
	Applying the orthogonal projection operator $P_{{\mathbf 1}}^{\perp}$ onto the orthogonal complement of $ \mathbf{1}$  to  $\bm \xi_v $, 
	we can have
	\begin{equation*}
		\begin{aligned}
			\left\|P_{\mathbf 1}^{\perp} \bm \xi_v\right\|_2^2 & =\|\bm \xi_v-\alpha_v \mathbf 1\|_2^2 \\
			& =\|\bm \xi_v\|_2^2-N|\alpha_v|^2 \\
			& \geq\left(1-\frac{d^2}{N}\right)\|\mathbf v\|_2^2
		\end{aligned}
	\end{equation*}
	Since $L_{\rm{eq}}^{\star} \succeq 0 $ and  $ L_{\rm{eq}}^{\star} \mathbf 1= \mathbf 0$,
	it follows that
	\begin{equation*}
		\begin{aligned}
			\mathbf v^H \mathcal Q^{\star} \mathbf v & =\bm \xi_v^H L_{\rm{eq}}^{\star} \bm \xi_v \\
			& =\left(P_{1}^{\perp} \bm \xi_v\right)^H L_{\rm{eq}}^{\star}\left(P_{1}^{\perp} \bm \xi_v\right) \\
			& \geq \gamma^{\star}\left\|P_{1}^{\perp} \bm \xi_v\right\|_2^2 \\
			& \geq \gamma^{\star}\left(1-\frac{d^2}{N}\right)\|\mathbf v\|_2^2
		\end{aligned}
	\end{equation*}
	
	\textbf{Step 2:}  Incorporating the bounds of the remaining items in (\ref{vkv-ori}) can obtain the theorem.
	
	Due to 
	\begin{equation*}
		\left|\mathbf v^H \Delta \mathcal Q \mathbf v\right| \leq\|\Delta \mathcal Q\|\|\mathbf v\|_2^2
	\end{equation*} 
	and 
	\begin{equation*}
	     \left|\sum_{i=1}^N y_i \left|v_i\right|^2\right|  \leq\|\mathbf y\|_{\infty}\|\mathbf v\|_2^2
	\end{equation*}
	then we can obtain
	\begin{equation*}
		\mathbf v^H K \mathbf v \geq\left(\gamma^{\star}\left(1-\frac{d^2}{N}\right)-\|\Delta \mathcal Q\|-\|\mathbf y\|_{\infty}\right)\|\mathbf v\|_2^2
	\end{equation*}
	Therefore, for $\mathbf v^H K \mathbf v > 0$, we have
	\begin{equation*}
		\gamma^{\star}\left(1-\frac{d^2}{N}\right)>\|\Delta \mathcal Q\|+\|\mathbf y\|_{\infty}
	\end{equation*}
	which implies Eq. (\ref{gamma-1-d/n}) and Theorem \ref{lemma-rank-k-n-1}.
\end{proof}

It should be noted that the optimality condition described in Theorem \ref{lemma-rank-k-n-1} imposes no restrictions on the region of the feasible solution, as it targets the first-order critical points within the entire feasible set. 
In Eq. (\ref{gamma-1-d/n}), the left-hand side (LHS) is free of noise terms, whereas the right-hand side (RHS) is subject to noise, which essentially acts as an implicit condition constraining the noise. 
The preceding sections have indicated that the optimal solution typically resides within the BOA.
Therefore, in the following section,  we will investigate the optimality conditions within the BOA, and derive the  explicit upper bound for the noise matrix.



\section{Optimality Conditions in BOA}

In this section, we  will investigate the optimality conditions within the BOA and derives more explicit and intuitive conditions based on Theorem \ref{lemma-rank-k-n-1}.
It can be observed that the form of Eq. (\ref{gamma-1-d/n}) in Theorem \ref{lemma-rank-k-n-1} is relatively compact, and more detailed results can be derived based on the bounds for  $d$  and  $\mathbf y $.

Since the bound on the BOA radius $d $ has been established, it only remains to derive the bound on $\mathbf y $ , which relies on the following two properties.

\begin{myppt}\label{ppt-1}
	Under the premise of Theorem \ref{lemma-1-2-norm}, assuming that the planted solution satisfies
	\begin{equation}
		\left\|\Delta \mathcal Q \mathbf{r}^{\star}\right\|_{\infty} \leq \eta_{\infty}
	\end{equation}
	then 
	\begin{equation}\label{deltaQ-r-est-inf-norm}
		\|\Delta \mathcal Q \mathbf{r}\|_{\infty} \leq \eta_{\infty}+\|\Delta \mathcal Q\| d \leq \eta_{\infty}+\frac{4 \sqrt{N}}{\gamma^{\star}}\|\Delta \mathcal Q\|^2
	\end{equation}
\end{myppt}

\begin{proof}
	As shown in the proof of Theorem \ref{lemma-1-2-norm}, we have 
	$\hat{\mathbf{r}}=e^{j \theta^{\star}} \mathbf{r}$. 
	Since multiplying by a global phase does not change the infinity norm of a vector, we have
	\begin{equation}
		\|\Delta \mathcal Q \mathbf{r}\|_{\infty}=\|\Delta \mathcal Q \hat{\mathbf{r}}\|_{\infty}
	\end{equation}
	Let 
	\begin{equation*}
		\Delta \mathcal Q \hat{\mathbf{r}}=\Delta \mathcal Q \mathbf{r}^{\star}+\Delta \mathcal Q\left(\hat{\mathbf{r}}-\mathbf{r}^{\star}\right)
	\end{equation*}
	then we can obtain 
	\begin{equation}\label{delta-Q-r}
		\begin{aligned}
			\|\Delta \mathcal Q \mathbf{r}\|_{\infty} & =\|\Delta \mathcal Q \hat{\mathbf{r}}\|_{\infty} \\
			& \leq\left\|\Delta \mathcal Q \mathbf{r}^{\star}\right\|_{\infty}+\left\|\Delta \mathcal Q\left(\hat{\mathbf{r}}-\mathbf{r}^{\star}\right)\right\|_{\infty} \\
			& \leq \eta_{\infty}+\|\Delta \mathcal Q\|\left\|\hat{\mathbf{r}}-\mathbf{r}^{\star}\right\|_2 \\
			& \leq \eta_{\infty}+\frac{4 \sqrt{N}}{\gamma^{\star}}\|\Delta \mathcal Q\|^2
		\end{aligned}
	\end{equation}
	which leads to Property \ref{ppt-1}.
\end{proof}

Property \ref{ppt-1} provides an upper bound on the effect of the noise matrix acting on the feasible solution in BOA. Based on this, we can further analyze the bound of the pose graph matrix acting, as shown below.

\begin{myppt}\label{new-tilde-Q-r-bound}
 	Let
 	\begin{equation}
 		\delta_{\max } \doteq \max _{1 \leq i \leq N}\left(L_{e q}^{\star}\right)_{i i}
 	\end{equation}
 	then 
 	\begin{equation}\label{q-r-inf-prope}
 		\begin{aligned}
 		\|\tilde{\mathcal Q} \mathbf r\|_{\infty}  \leq \eta_{\infty}+\sqrt{\frac{8 N \delta_{\max }}{\gamma^{\star}}}\|\Delta \mathcal Q\| 
 		+
\frac{4 \sqrt{N}}{\gamma^{\star}} \|\Delta \mathcal Q\|^2
 		\end{aligned}
 	\end{equation}
 \end{myppt}
 
 \begin{proof}
 	Based on the triangle inequality of norm, we have 
 	\begin{equation}\label{tilde-Q-r-leq-inf}
 		\|\tilde{\mathcal Q} \mathbf r\|_{\infty}
 		 \leq\left\|\mathcal Q^{\star} {\mathbf r}\right\|_{\infty}
 		+\|\Delta \mathcal Q {\mathbf r}\|_{\infty} 
 	\end{equation}
 	Utilizing the decomposition in (\ref{d2-2-N-2Na}) (\ref{u-1=xi-d2}), we can obtain
 	\begin{equation*}
 		{\mathbf r}=\Phi^{\star} \mathbf u=\alpha \mathbf r^{\star}+\Phi^{\star} \bm \xi
 	\end{equation*}
and 
 	\begin{equation*}
 		\begin{aligned}
 		& \mathcal Q^{\star} {\mathbf r}=\Phi^{\star} L_{\text{eq}}^{\star}\left(\Phi^{\star}\right)^H {\mathbf r}=\Phi^{\star} L_{\text{eq}}^{\star} \mathbf u=\Phi^{\star} L_{\text{eq}}^{\star} \bm \xi \\
 		& \to \quad   \left\|\mathcal Q^{\star} {\mathbf r}\right\|_{\infty}=\left\|L_{\text{eq}}^{\star} \bm \xi\right\|_{\infty}
 		\end{aligned}
 	\end{equation*}
 	then (\ref{tilde-Q-r-leq-inf}) can be rewritten as 
 	\begin{equation}\label{q-r-inf-leq}
 		\|\tilde{\mathcal Q} \mathbf r\|_{\infty} \leq \left\|L_{\text{eq}}^{\star} \bm \xi\right\|_{\infty}+
\|\Delta \mathcal Q \hat{\mathbf r}\|_{\infty} 
 	\end{equation}
 	Next, 
 	based on Property \ref{ppt-1},
 	we only need to determine the bounds  of 
 	$\left\|L_{\text{eq}}^{\star} \bm \xi\right\|_{\infty} $. 
 	
 	For any index $i$, we have 
 	\begin{equation*}
 		\left|\left(L_{\text{eq}}^{\star} \bm \xi\right)_i\right|=\left|(\mathbf{s}^{(i)})^H L_{\text{eq}}^{\star} \bm \xi\right|
 	\end{equation*}
 	Applying the Cauchy-Schwarz inequality, we can obtain 
 	\begin{equation*}
 		\left|(\mathbf{s}^{(i)})^H L_{\text{eq}}^{\star} \bm \xi\right| \leq \sqrt{(\mathbf{s}^{(i)})^H L_{e q}^{\star} \mathbf{s}^{(i)}} \sqrt{\bm \xi^H L_{e q}^{\star} \bm \xi}
 	\end{equation*}
 	then 
 	\begin{equation*}
 		\left\|L_{{\text{eq}}}^{\star} \bm \xi\right\|_{\infty} \leq \sqrt{\delta_{\max }} \sqrt{\bm \xi^H L_{\text{eq}}^{\star} \bm \xi}
 	\end{equation*}
 	and based on (\ref{xilxi=rqr}), we can further  have 
 	\begin{equation}\label{L-eq-xi-inf}
 		\left\|L_{\text{eq}}^{\star} \bm \xi\right\|_{\infty} \leq \sqrt{\frac{8 N \delta_{\max }}{\gamma^{\star}}}\|\Delta \mathcal Q\|
 	\end{equation}
 	
 	
 	Therefore, based on (\ref{q-r-inf-leq}) (\ref{L-eq-xi-inf}) , we can obtain 
 	\begin{equation*}
 		\begin{aligned}
 			\|\tilde{\mathcal Q} \mathbf r\|_{\infty}  \leq \eta_{\infty}+\sqrt{\frac{8 N \delta_{\max }}{\gamma^{\star}}}\|\Delta \mathcal Q\| 
 			+
\frac{4 \sqrt{N}}{\gamma^{\star}}\|\Delta \mathcal Q\|^2
 		\end{aligned}
 	\end{equation*}
 	which implies (\ref{q-r-inf-prope}).
 \end{proof}
 
Properties \ref{ppt-1} and \ref{new-tilde-Q-r-bound} analyze the effects of the noise matrix and the pose graph matrix, respectively. 
Based on the above analysis, we can
further obtain the bound of $\mathbf y$.

Since
\begin{equation}
	\left| y_i\right|=\left|\overline{{r}_i}(\tilde{\mathcal{Q}}  {\mathbf{r}})_i\right| \leq \left|(\tilde{\mathcal{Q}} {\mathbf{r}})_{i}\right|
\end{equation}
then
we can directly obtain
\begin{equation}\label{y-q-r-inf-lemma}
	\|{\mathbf y}\|_{\infty}  \leq \|\tilde{\mathcal{Q}} \mathbf{r}\|_{\infty}
\end{equation}

Furthermore, we can substitute the relevant bounds into (\ref{gamma-1-d/n})  to obtain a more explicit form.
Based on  Theorem \ref{lemma-1-2-norm}, we can obtain 
\begin{equation}\label{d-2-N-delta-Q}
	\frac{d^2}{N} \leq \frac{16}{(\gamma^{\star})^2}\|\Delta \mathcal Q\|^2
\end{equation}
Therefore, combining (\ref{gamma-1-d/n}) 
(\ref{y-q-r-inf-lemma})
(\ref{d-2-N-delta-Q}),
it follows that 
\begin{equation}\label{delta-Q-mix-bound}
	\begin{aligned}
		&\gamma^{\star}\left(1-\frac{16}{(\gamma^{\star})^2}\|\Delta \mathcal Q\|^2\right) >	\\
		&\|\Delta \mathcal Q\|+
	\eta_{\infty}+\sqrt{\frac{8 N \delta_{\max }}{\gamma^{\star}}}\|\Delta \mathcal Q\|_2 
	+ \frac{4 \sqrt{N}}{\gamma^{\star}} \|\Delta \mathcal Q\|_2^2
	\end{aligned}
\end{equation}
which can also lead to Theorem \ref{lemma-rank-k-n-1}.
It can be observed that Eq. (\ref{delta-Q-mix-bound}) primarily imposes a constraint on the norm of  noise matrix  $\Delta \mathcal Q$. 
Therefore, through simple algebraic manipulation, we can obtain an upper bound expression for $\|\Delta \mathcal Q\|$:
\begin{equation}\label{delta-q-norm-final}
		\|\Delta \mathcal Q\| < \\ \frac{-T_1+\sqrt{T_1^2+4T_2 \left(\gamma^{\star}-\eta_{\infty}\right)}}{2T_2}
\end{equation}
where $T_1 =1+\sqrt{\frac{8 N \delta_{\text {max }}}{\gamma^{\star}}}$ and 
$T_2  =\frac{16+4 \sqrt{N}}{\gamma^{\star}}$.

\begin{myrek}
It should be emphasized that, as a quadratic inequality in terms of  $\|\Delta \mathcal Q\| $, the validity of (\ref{delta-Q-mix-bound}) and (\ref{delta-q-norm-final}) implicitly requires a condition:  $\eta_{\infty} < \gamma^\star$. 
If this condition is not satisfied, the fundamental constraint $ \|\Delta \mathcal Q\|  \geq 0$  cannot be met. Hence,  $\eta_{\infty} < \gamma^\star$  is an inherent requirement.

Returning to Property \ref{ppt-1},  $\eta_{\infty}$ is the assumed upper bound on the influence of the noise matrix $ \Delta \mathcal Q$ on the planted solution $\mathbf r^\star $. It should be noted that this assumption is reasonable, because if the effect of the noise were unbounded, the problem would lose practical meaning. 
However, Property \ref{ppt-1} does not explicitly specify a concrete upper bound. 
Obviously, above condition $\eta_{\infty} < \gamma^\star$ actually serves as an upper bound for  $\eta_{\infty}$, thereby quantifying the effective impact of the noise matrix.
\end{myrek}

Based on the above analysis and discussion, we can restate Theorem \ref{lemma-rank-k-n-1} in a more detailed and direct form, as presented in the following theorem.
\begin{mytheo}[Optimality condition within the BOA]\label{final-theo}
	For PGO problem (\ref{pgo-unit-form}),
	let feasible solution $\mathbf r$ be a first-order critical point and 
	satisfy 
	\begin{equation}\label{bound-of-delta-Q-norm}
		\|\Delta \mathcal Q\| < \\ \frac{-T_1+\sqrt{T_1^2+4T_2 \left(\gamma^{\star}-\eta_{\infty}\right)}}{2T_2}
	\end{equation}
	where $T_1 = 1+\sqrt{\frac{8 N \delta_{\text {max }}}{\gamma^{\star}}}$,
	$T_2 = \frac{16+4 \sqrt{N}}{\gamma^{\star}}$ and  $\eta_{\infty} < \gamma^\star$.
	If $\mathbf r$ in BOA, i.e.,	\begin{equation*}
		\mathbf{r}^H \tilde{\mathcal{Q}} \mathbf{r}  \leq \left(\mathbf{r}^{\star}\right)^H
		\tilde{\mathcal{Q}} \mathbf{r}^{\star}
	\end{equation*}
	then $\mathbf r$ is the  global optimizer of the primal problem (\ref{pgo-unit-form}).
\end{mytheo}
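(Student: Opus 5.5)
The plan is to reduce Theorem \ref{final-theo} to Theorem \ref{lemma-rank-k-n-1}. Since $\mathbf r$ is assumed to be a first-order critical point, it suffices to verify the scalar inequality (\ref{gamma-1-d/n}). Once it holds, Theorem \ref{lemma-rank-k-n-1} gives $\operatorname{rank}(K)=N-1$, and Property \ref{kkt-sdp} then yields global optimality of $\mathbf r$. The hypothesis that $\mathbf r$ lies in the BOA is what makes Theorem \ref{lemma-1-2-norm} and Properties \ref{ppt-1}, \ref{new-tilde-Q-r-bound} applicable. Throughout I will write $x \doteq \|\Delta\mathcal Q\|$ and use the standing assumption $\|\Delta\mathcal Q \mathbf r^\star\|_\infty\le\eta_\infty$ from Property \ref{ppt-1}.

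\textbf{Step 1.} Bound the left-hand side of (\ref{gamma-1-d/n}) from below. Theorem \ref{lemma-1-2-norm} gives $d\le 4\sqrt N x/\gamma^\star$, so $d^2/N\le 16x^2/(\gamma^\star)^2$ as in (\ref{d-2-N-delta-Q}). Hence
$\gamma^\star(1-d^2/N)\ge \gamma^\star-16x^2/\gamma^\star$.

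\textbf{Step 2.} Bound the right-hand side from above. Since $|r_i|=1$, we have $\|\mathbf y\|_\infty\le\|\tilde{\mathcal Q}\mathbf r\|_\infty$ by (\ref{y-q-r-inf-lemma}). Property \ref{new-tilde-Q-r-bound} then gives
$\|\mathbf y\|_\infty\le \eta_\infty+\sqrt{8N\delta_{\max}/\gamma^\star}\,x+4\sqrt N x^2/\gamma^\star$.
Here I would double-check the constant inside Property \ref{new-tilde-Q-r-bound}. It arises from $\bm\xi^H L^\star_{\rm eq}\bm\xi\le 2\sqrt N x d\le 8N x^2/\gamma^\star$, which is exactly the source of the $\sqrt{8N\delta_{\max}/\gamma^\star}$ factor, so it is consistent. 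Adding $x$ to this bound gives the right-hand side bound
$\eta_\infty + T_1 x + \frac{4\sqrt N}{\gamma^\star}x^2$.

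\textbf{Step 3.} Compare the two bounds. A sufficient condition for (\ref{gamma-1-d/n}) is $\gamma^\star-16x^2/\gamma^\star > \eta_\infty+T_1x+4\sqrt N x^2/\gamma^\star$, which is equivalent to
\[
g(x)\doteq T_2x^2+T_1x-(\gamma^\star-\eta_\infty)<0 .
\]
Since $T_2>0$ and $\gamma^\star-\eta_\infty>0$, the quadratic $g$ has a negative root and a positive root $x_+=\bigl(-T_1+\sqrt{T_1^2+4T_2(\gamma^\star-\eta_\infty)}\bigr)/(2T_2)$. Because $x\ge 0$, we get $g(x)<0$ exactly when $x<x_+$, and this is hypothesis (\ref{bound-of-delta-Q-norm}). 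The condition $\eta_\infty<\gamma^\star$ is what guarantees $x_+>0$, so the hypothesis is not vacuous.

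\textbf{Expected obstacle.} The main delicate point is not the algebra but the chaining of the inequalities. First, all bounds must be applied to the same phase-aligned $\hat{\mathbf r}$; this is harmless, since $d$, $\|\mathbf y\|_\infty$ and $K$'s rank are phase invariant. Second, the strictness in (\ref{gamma-1-d/n}) must survive the chain of non-strict upper and lower bounds. It does, because each step replaces a quantity by a bound in the direction that only makes the inequality harder to satisfy, and the final comparison is strict. With these checked, Theorem \ref{lemma-rank-k-n-1} applies and gives the conclusion.
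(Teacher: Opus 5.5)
Your proposal is correct and follows the paper's own route. The paper also derives Theorem \ref{final-theo} by substituting three bounds into condition (\ref{gamma-1-d/n}) of Theorem \ref{lemma-rank-k-n-1}: the estimate $d^2/N \le 16\|\Delta\mathcal Q\|^2/(\gamma^\star)^2$ from Theorem \ref{lemma-1-2-norm}, the bound $\|\mathbf y\|_\infty \le \|\tilde{\mathcal Q}\mathbf r\|_\infty$, and Property \ref{new-tilde-Q-r-bound}. This gives the quadratic inequality (\ref{delta-Q-mix-bound}), which it solves for the positive root, and your checks of the $\sqrt{8N\delta_{\max}/\gamma^\star}$ constant, phase invariance, and the role of $\eta_\infty<\gamma^\star$ are all consistent with that derivation.
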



\begin{myrek}
	The conditions in Theorem \ref{lemma-rank-k-n-1} and Theorem \ref{final-theo} are essentially designed to verify whether the feasible solution is globally optimal; however, they exhibit distinct differences. 

	Theorem \ref{lemma-rank-k-n-1} can be served as a posteriori condition, meaning that an feasible solution must first be obtained. If the orbit radius of this solution is sufficiently small and satisfies the proposed inequality, its global optimality is guaranteed. 
	In other words, Theorem \ref{lemma-rank-k-n-1} indicates that for the PGO problem, a good solution is highly likely to be globally optimal.
	Meanwhile, Theorem \ref{lemma-rank-k-n-1} is a more general condition that is not restricted to the BOA but applies to the entire feasible set.

	In contrast, Theorem \ref{final-theo} can be regarded as an a priori condition. 
	It allows for the verification of global optimality for the PGO problem based solely on the noise matrix satisfying the proposed bound, without the need to compute the feasible solution in advance. 
	Although Theorem \ref{final-theo} still requires the feasible solution to be a first-order critical point within the BOA, these conditions are easy to satisfy using general numerical methods. 
	Therefore, Theorem \ref{final-theo} indicates that when the noise matrix satisfies the proposed bound, the global optimum of the problem can be readily obtained.
	
	In summary, both Theorem \ref{lemma-rank-k-n-1} and Theorem \ref{final-theo} 
	provide the analytical conditions under which the PGO problem achieves global optimality, essentially indicating that the problem possesses a favorable optimization landscape—namely, under certain conditions, the point satisfying the first-order critical condition is actually the global optimum of the problem.
\end{myrek}

\begin{table}[h]
	\centering  
	\renewcommand\arraystretch{1.2} 
	\caption{Parameters  and simulation results for 4 robot nodes}
	\begin{tabular}{ccc}
		\toprule  
		Parameter & Description &  Value \\
		\midrule 
		$ \gamma^{\star} $   &   Spectral gap     &  $ 2.229 $  \\ 
		$d $   &    Orbit radius ($2$-norm)     &  $ 0.065 $  \\ 
		Eq.  (\ref{lemma1-res})    &   Bound of $d$  &  $  0.615   $  \\
		$d_{\infty}$  & Orbit radius ($\infty$-norm)    &  $0.048$\\
		Eq. (\ref{d_inf-leq-2sq} )   & Bound of  $d_{\infty}$ &  $0.482$ \\
		$ \eta_{\infty} (\left\|\Delta \mathcal Q \mathbf{r}^{\star}\right\|_{\infty})  $  
		& Bound of noise acting on truth &  $0.229$ \\
		$\|\Delta \mathcal Q \mathbf{r}\|_{\infty}$ &  Noise acting on feasible point   &  $0.232$ \\ 
		Eq. (\ref{deltaQ-r-est-inf-norm}) & Bound of $\|\Delta \mathcal Q \mathbf{r}\|_{\infty}$     & $0.335$ \\
		$\|\tilde{\mathcal Q} \mathbf r\|_{\infty}$ & Pose graph acting on feasible point  & $0.053$ \\
		Eq. (\ref{q-r-inf-prope})  & Bound of $\|\tilde{\mathcal Q} \mathbf r\|_{\infty}$ & $1.682$  \\	
		LHS of Eq. (\ref{gamma-1-d/n}) & Left hand side of Eq. (\ref{gamma-1-d/n}) &   $2.227$   \\
		RHS of Eq. (\ref{gamma-1-d/n}) & Right hand side of Eq. (\ref{gamma-1-d/n}) &  $0.224$   \\
		$\|\Delta \mathcal Q\|$ & $2$-norm of noise matrix  &  $0.171$  \\
		Eq. (\ref{bound-of-delta-Q-norm}) & Bound of noise matrix   &  $0.184$\\
		\bottomrule 
	\end{tabular}
\label{4-nodes-tab_simulation}
\end{table}

\begin{figure}[h] 
\centering 
\includegraphics[width=0.45\textwidth]{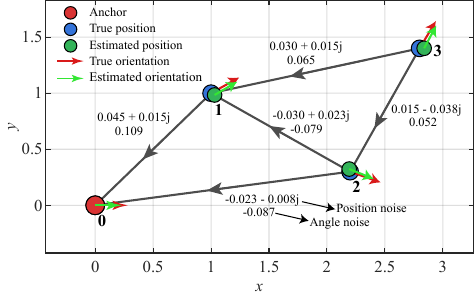} 
\caption{PGO for 4 robot nodes.} 
\label{fig-4-node} 
\end{figure}

\section{Simulation Results}

In this section, we will present numerical results to more intuitively illustrate and validate the theoretical results derived in this paper.
For brevity without loss of generality, we conduct simulations using the scenario of  relative localization problem among multiple robots, where the robot nodes and  inter-robot observations form a pose graph.

To reduce the impact of the algorithm itself on the problem, we employ only the classic and general Gauss-Newton method to solve the PGO problem.
To ensure the algorithm obtains the optimal solution, multiple sets of initial values are employed to drive distinct iterations, thereby helping the algorithm avoid local optima. 
For the final estimated solution, first-order critical point verification is conducted. 
Additionally, the positive semidefiniteness and the rank condition (rank = $N-1$) of  dual certificate $K$ are checked to validate the optimality of the estimate solution.

It is  noted that all the data below use the International System of Units (SI), while the simulation platform is based on Windows 11 (Intel i5-12600KF) and MATLAB 2024.

\subsection{Case A: Simple Illustration via An Instance}

In this case, we first utilize a simple and intuitive example of multi-robot relative localization to briefly illustrate the theorems and properties proposed in this paper.
We consider a relative localization problem with 4 robot nodes, as shown in  Fig. \ref{fig-4-node}, which  includes the true solution, estimated solution, and observation noise.
The relevant parameters and simulation results are also presented in  Table \ref{4-nodes-tab_simulation}.

It should be noted  that in Table \ref{4-nodes-tab_simulation}, since 
$\eta_{\infty}$ is the assumed upper bound for $\left\|\Delta \mathcal Q \mathbf{r}^{\star}\right\|_{\infty}$, we directly set them equal for convenience.
Through the verification of dual certificate, the estimated solution obtained by the Gauss-Newton method is a global optimum. 
As shown in Table \ref{4-nodes-tab_simulation}, the orbit radius (in both the $2$-norm and the $\infty$-norm) between the estimated solution and the true solution fall within the proposed upper bounds. 
For Theorem \ref{lemma-rank-k-n-1}, the data in table indicate that the LHS of Eq. (\ref{gamma-1-d/n}) is greater than the RHS, which implies that the estimated solution is globally optimal.
Furthermore, the estimated solution is a first-order critical point, and the $2$-norm of the noise matrix is within the proposed bounds. This is consistent with the result of Theorem \ref{final-theo}, demonstrating that the conditions and bounds proposed in this paper can correctly determine the global optimality of the estimated solution.


\begin{figure*}[htbp] 
	\centering 
	\includegraphics[width=0.88\textwidth]{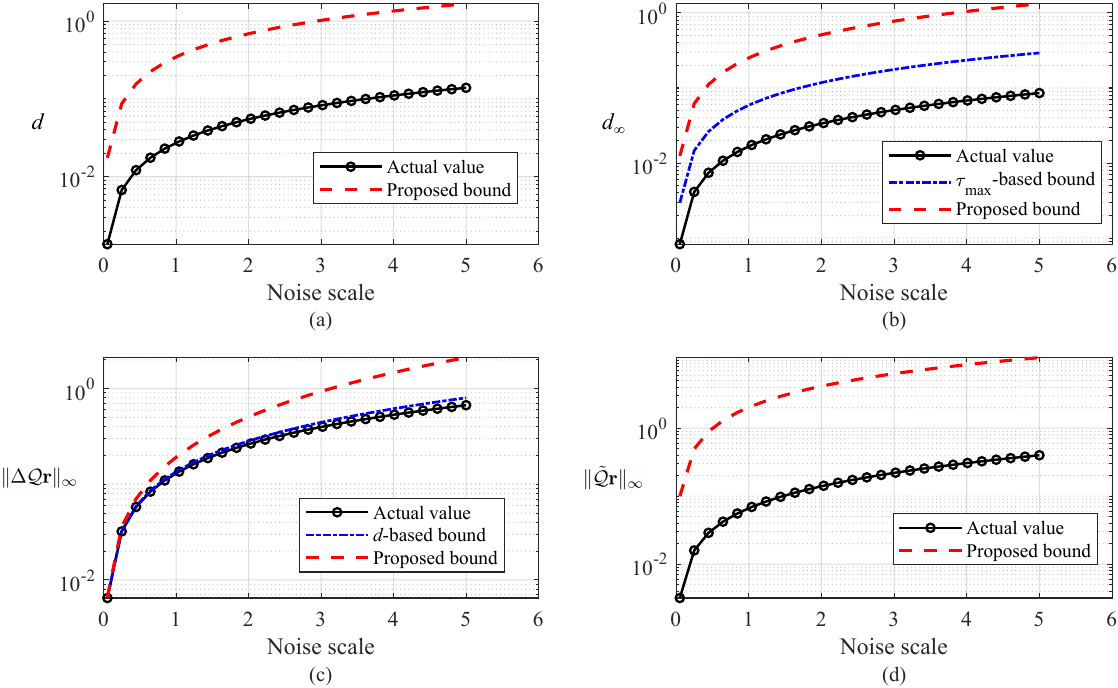} 
	\caption{Evaluation of the proposed bound under varying noise intensities.} 
	\label{fig-bound-vali} 
\end{figure*}

\begin{figure}[htbp] 
	\centering 
	\includegraphics[width=0.44\textwidth]{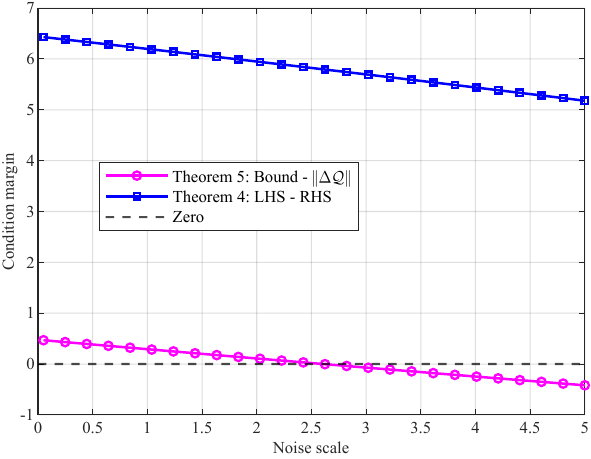} 
	\caption{Margins of  Theorem \ref{lemma-rank-k-n-1} and Theorem \ref{final-theo}.} 
	\label{fig-10-margin} 
\end{figure}

\subsection{Case B: Validation of Theoretical Results}

In this case, we consider a more complex localization problem involving 10 nodes.
We assume Gaussian noise, setting the standard deviation of the baseline position noise to $2 \%$ of the average edge length, and that of the baseline orientation noise to $0.035 \ \si{rad}$. Based on these baseline noise levels, we apply a noise scale factor to adjust the standard deviations, thereby demonstrating the impact of varying noise intensities on the proposed theory.

The simulation results are presented in Fig. \ref{fig-bound-vali} and Fig. \ref{fig-10-margin}.
The Fig. \ref{fig-bound-vali} illustrates the variation trends of the orbit radius (BOA) bound and the bound of noise acting on the estimated solution.
Since Equations (\ref{d_inf-leq-2sq}) and (\ref{deltaQ-r-est-inf-norm}) provide two successively increasing bounds for  $d_{\infty}$ and $	\|\Delta \mathcal Q \mathbf{r}\|_{\infty}$, respectively, these intermediate bounds are all presented in Fig. \ref{fig-bound-vali}(b) and \ref{fig-bound-vali}(c). 
It can be observed that the actual values of the orbit radius and the noise acting are both within the proposed bounds. Furthermore, as the noise scale increases, the  bounds exhibit a steady growth, without any abrupt changes.

For the optimality condition of PGO, Theorem \ref{lemma-rank-k-n-1} and Theorem \ref{final-theo} actually represent two distinct criteria. 
Theorem \ref{lemma-rank-k-n-1} serves as an indirect condition that relies on the actual orbit radius, whereas Theorem \ref{final-theo} provides a more direct criterion by establishing an explicit upper bound on the noise matrix. 
Fig. \ref{fig-10-margin} illustrates the variation trends of these two conditions. 
For presentation clarity, the values are plotted relative to zero. 
Specifically, for Eq. (\ref{gamma-1-d/n}), the value of (LHS - RHS) is depicted; 
for Eq. (\ref{bound-of-delta-Q-norm}), the value of (Bound - $ \|\Delta \mathcal Q\| $)  is shown.
It can be observed that the value of Theorem \ref{lemma-rank-k-n-1} remains consistently greater than zero, stably verifying the optimality of solution. 
In contrast, the value of Theorem \ref{final-theo} gradually drops below zero as the noise scale increases, thereby rendering Theorem \ref{final-theo} invalid. 
This phenomenon is reasonable, as Theorem \ref{lemma-rank-k-n-1} involves no significant relaxation, whereas Theorem \ref{final-theo} introduces upper bounds derived from various relaxations.
However, within a certain range of noise intensity, Theorem \ref{final-theo} can still serve as an efficient criterion for verifying the global optimality of  solution.

\subsection{Case C: Monte-Carlo Validation on Random Instances}

In this case, we consider random instances with varying topologies and noise levels via Monte Carlo sampling to analyze and verify the optimality conditions corresponding to Theorem \ref{lemma-rank-k-n-1} and Theorem \ref{final-theo}. 
A 20-node configuration is considered, where the topological connectivity varies randomly. 
This connectivity is represented by the topology coefficient $\frac{m}{N}$,
with $m$ denoting the number of edges. 
Similar to \textit{Case B} , the same baseline 
noise is adopted, and the noise intensity is adjusted by varying the noise scale factor. 
A total of 8 topology coefficients are evaluated, each corresponding to 7 noise scale factors. 
For each noise scale, 30 random samples are drawn, resulting in a total of 1,680 generated instances. 
As previously mentioned, for each random instance, we conduct the search from multiple initial points and employ dual certificate verification to ensure that the obtained estimated solution is globally optimal.

\begin{figure}[h] 
	\centering 
	\includegraphics[width=0.44\textwidth]{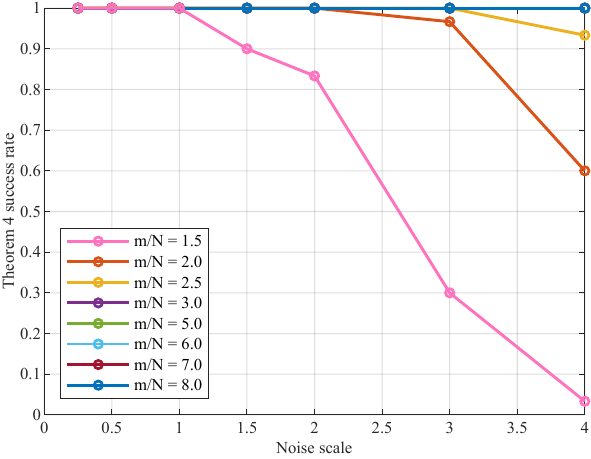} 
	\caption{Success rate of optimality condition in  Theorem \ref{lemma-rank-k-n-1}.} 
	\label{fig-20-T4-succ} 
\end{figure}

\begin{figure}[h] 
	\centering 
	\includegraphics[width=0.44\textwidth]{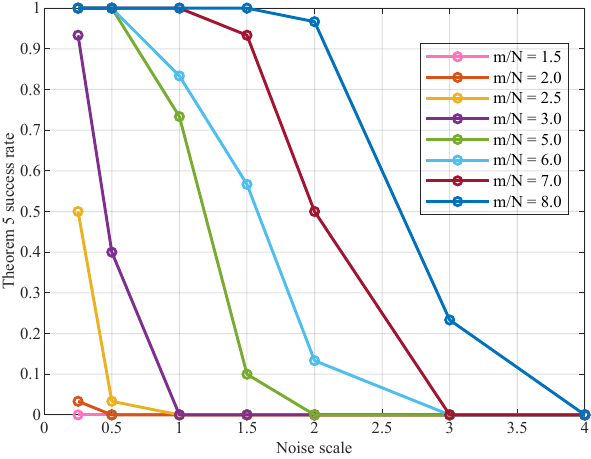} 
	\caption{Success rate of optimality condition in  Theorem  \ref{final-theo}.} 
	\label{fig-20-T5-succ} 
\end{figure}

The simulation results are shown in Figs. \ref{fig-20-T4-succ}, \ref{fig-20-T5-succ}, 
\ref{fig-20-T4-point}, and \ref{fig-20-T5-point}, respectively.
Fig. \ref{fig-20-T4-succ} and Fig. \ref{fig-20-T5-succ} illustrate the success rates of optimality conditions for Theorems \ref{lemma-rank-k-n-1} and \ref{final-theo} under varying topological connectivities and noise intensities. 
It can be observed that a higher number of edge connections leads to a higher success rate, 
whereas an increase in noise intensity results in a decreased success rate. 
This phenomenon is reasonable because topological connectivity fundamentally affects the robustness of the entire system ($\gamma^\star$); 
specifically, a greater number of observation edges reduces the interference of noise on valid observation  information.

\begin{figure}[h] 
	\centering 
	\includegraphics[width=0.44\textwidth]{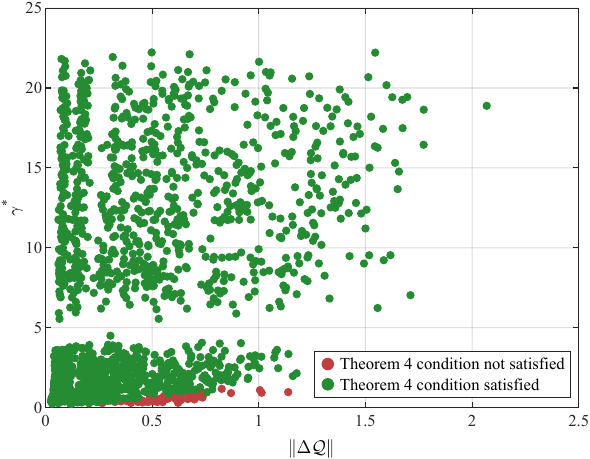} 
	\caption{Distribution of optimality condition satisfaction in Theorem \ref{lemma-rank-k-n-1}.} 
	\label{fig-20-T4-point} 
\end{figure}

\begin{figure}[h] 
	\centering 
	\includegraphics[width=0.44\textwidth]{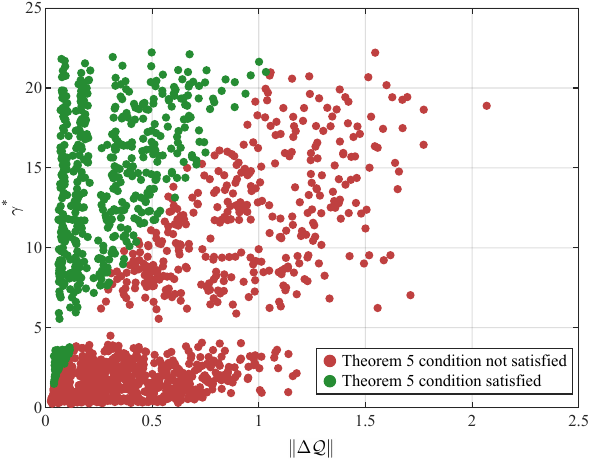} 
	\caption{Distribution of optimality condition satisfaction in  Theorem  \ref{final-theo}.} 
	\label{fig-20-T5-point} 
\end{figure}

Figs. \ref{fig-20-T4-point} and \ref{fig-20-T5-point} illustrate the satisfaction of optimality conditions in Theorems \ref{lemma-rank-k-n-1} and \ref{final-theo} across each random instance. 
It can be observed that Theorem \ref{lemma-rank-k-n-1} fails only in cases with extremely sparse connections, whereas Theorem \ref{final-theo} proves more effective in scenarios with denser connections. 
This observation is consistent with the previously discussed results. 
It is worth noting that since the bounds in Theorem \ref{final-theo} introduce upper bounds for multiple variables, these relaxed bounds inevitably affect the bound on $\|\Delta \mathcal Q\|$, thereby somewhat limiting the applicability. 
Nevertheless, as an explicit upper bound, 
Theorem \ref{final-theo} remains valid and holds significant application potential.

\section{Conclusion}

In this paper, we investigate the BOA  and global optimality of PGO from a theoretical perspective. Based on the complex-domain formulation and its SDP relaxation, we establish a theoretical framework connecting orbit radius, noise perturbation, and global optimality. 
Specifically, we derive explicit upper bounds on the orbit radius within the BOA under various norms.
Furthermore, by constructing a dual certificate based on the SDP relaxation and the KKT conditions, we establish sufficient conditions under which a first-order critical point is guaranteed to be globally optimal.

Building on the derived orbit-radius and noise-action bounds, we further obtain an explicit analytical upper bound on the noise matrix norm 
to guarantee the global optimality.
These results establish a quantitative connection between the geometry of the BOA and the global optimality, revealing a favorable optimization landscape under  small noise. 
Simulation results under various cases verify the proposed theory, demonstrating that it can serve as an effective criterion for analyzing and determining the global optimality of PGO.

\bibliographystyle{IEEEtran}
\normalem\bibliography{ref_paper}

\end{document}